\newtheorem{theorem}{Theorem}
\newtheorem{lemma}{Lemma}
\newtheorem{corollary}[theorem]{Corollary}
\newtheorem{proposition}{Proposition}
\newtheorem{definition}{Definition}
\newcommand{\marmax}{\texttt{MAR}_{\max}}
\newcommand{\AC}{\mathcal{T}}
\newcommand{\BN}{\mathcal{N}}
\newcommand{\SPN}{\mathcal{S}}
\newcommand{\CREDAL}{\bm{\mathcal{C}}}
\newcommand{\ordering}{\sigma}
\newcommand{\evidence}{e}
\newcommand{\acnode}{t}
\newcommand{\subcirc}{\alpha}
\newcommand{\parvals}{\bm{u}}
\newcommand{\param}{\theta}
\newcommand{\indicators}{\bm{\lambda}}
\newcommand{\parameters}{\Theta}
\newcommand{\val}{v}
\newcommand{\EX}{\texttt{EX}}
\title{Robustness Guarantees for Credal Bayesian Networks via Constraint Relaxation over Probabilistic Circuits}
\author{
Hjalmar Wijk\and
Benjie Wang\and
Marta Kwiatkowska
\affiliations
University of Oxford\\
\emails
hjalmar.wijk@st-annes.ox.ac.uk,
benjie.wang@keble.ox.ac.uk,
marta.kwiatkowska@cs.ox.ac.uk
}
\begin{document}

\maketitle

\begin{abstract}
  
  In many domains, worst-case guarantees on the performance (e.g., prediction accuracy) of a decision function subject to distributional shifts and uncertainty about the environment are crucial. In this work we develop a method to quantify the robustness of decision functions with respect to credal Bayesian networks, formal parametric models of the environment where uncertainty is expressed through \textit{credal sets} on the parameters. In particular, we address the maximum marginal probability ($\texttt{MAR}_{\max}$) problem, that is, determining the greatest probability of an event (such as misclassification) obtainable for parameters in the credal set. We develop a method to faithfully transfer the problem into a constrained optimization problem on a probabilistic circuit. By performing a simple constraint relaxation, we show how to obtain a guaranteed upper bound on $\texttt{MAR}_{\max}$ in linear time in the size of the circuit. We further theoretically characterize this constraint relaxation in terms of the original Bayesian network structure, which yields insight into the tightness of the bound. We implement the method and provide experimental evidence that the upper bound is often near tight and demonstrates improved scalability compared to other methods. 

\end{abstract}

\section{Introduction}

Probabilistic models allow us to make quantitative inferences about the behaviour
of complex systems, and are an important tool to guide their use and design. When such models are learnt from data, exposed to potential distribution shifts or are partially unknown, it is important to be
able to verify the robustness of inferences on the model to these uncertainties. This is particularly relevant for decision functions taking action in the model, where much work has gone into verifying worst-case behaviour when exposed to various disturbances or
changes in the environment (distribution shifts).
Causal Bayesian networks (BNs)~\cite{pearl1985bayesian} are compelling models for this purpose, since one can perform causal interventions
on them, giving rise to families of distributions that share a common
structure. However, performing useful inference on BNs is often intractable, and one way to address this is to compile them into more tractable representations such as arithmetic circuits~\cite{darwiche2003differential}. Recent work has shown that such compilation methods can also efficiently compute bounds on a decision function's
robustness to causal interventions~\cite{wang2021provable}. A limiting factor on the applicability of these methods is
the need to have an exact model, where all non-intervened parameters are known precisely. This is
difficult to achieve when learning parameters from data, since most settings will only allow reliable
determination up to some error bound $\epsilon$.

In this paper we study robustness of credal Bayesian networks (CrBNs) \cite{maua2020thirty}, a generalisation of Bayesian networks where parameters are only known to be within some
credal sets (e.g., intervals). They can be used to model causal interventions, but are also very well suited to modelling parameters learned from data, as well as %and can be used to model 
modelling of exogenous variables \cite{zaffalon2020structural}.

We consider the maximum marginal probability ($\texttt{MAR}_{\max}$) problem for CrBNs and develop a solution by encoding the network as a tractable probabilistic circuit (a credal extension of sum-product networks, called CSPNs). %
More specifically, this paper makes the following contributions: 
(i) a method for constructing a probabilistic circuit whose parameters semantically represent the conditional probability distributions of a BN, allowing the transfer of credal inference problems from a highly intractable setting (CrBNs) to a tractable one (CSPN) through constraint relaxation;
(ii) algorithms which make use of this transfer to compute upper and lower bounds on probabilities of events under many forms of parameter uncertainty;
(iii) a characterization of the tightness of the upper bound in terms of the network structure; and
(iv) an evaluation on a set of benchmarks, demonstrating comparable precision and significantly improved scalability compared to state-of-the-art credal network inference, while also providing formal guarantees.

    Due to space constraints some details and proofs can be found in the Appendix of the extended paper at  \hyperlink{http://www.fun2model.org/bibitem.php?key=WWK22}{http://www.fun2model.org/bibitem.php?key=WWK22}

\subsection{Related Work}

The problem of robustness of inferences under imprecise knowledge of the distribution has been studied under many guises. In the machine learning community, there has been much work on robustness of classifiers to simple adversarial attacks or distribution shifts \cite{quinonero2009dataset,zhang2015multi,lipton2018detecting}. Motivated by safety concerns, methods have been developed to compute formal guarantees of robustness through constraint solving~ \cite{katz2017reluplex,narodytska2018verifying} or output reachability analysis~\cite{ruan2018reachability}. However, these methods do not model the environment, and are thus limited in the types of distributional shifts they can address.

In the Bayesian network literature, robustness has primarily been studied in terms of the effect of parameters on inference queries, such as marginal probabilities. For instance, sensitivity analysis \cite{coupe2000sensitivity,chan2004multiple} is concerned with the effect of small, local changes/perturbations to parameters. Closer to our work is the formalism of credal networks \cite{maua2020thirty}, which represent imprecise knowledge by positing sets of parameters for each conditional distribution, rather than precise values. Inference then corresponds to computing maximal (or minimal) probabilities over the possible parameter values. Unfortunately, exact methods for inference in credal networks do not perform well except for smaller networks with simple credal sets, or in special cases such as polytrees \cite{fagiuoli19982u,decampos2007intprog}. On the other hand, approximate methods \cite{CANO2007hillclimb,antonucci2010gl2u,antonucci2015approxlp} usually cannot provide theoretical guarantees (upper bounds), limiting their applicability in safety-critical scenarios. 

This paper builds on work %that has been done
showing the tractability of credal inference for certain probabilistic circuits \cite{maua2017credal} \cite{mattei2020tractable}. Our key contribution is a method for mapping credal network problems into tractable credal inference problems on probabilistic circuits, which affords not only greater scalability compared to the state-of-the-art in credal network inference, but also provides formal guarantees.

Finally, methods for providing robustness guarantees for classifiers in combination with a Bayesian network environment model have recently been proposed \cite{wang2021provable}. Our paper generalizes and extends their work, enabling efficient computation for broader and more realistic classes of parameter uncertainty.

\section{Background}

A Bayesian network (BN) $\mathcal{N}=(\mathcal{G},\Theta)$ over discrete variables $\bm{V}=\{V_1,...,V_n\}$ consists of a directed acyclic graph (DAG) $\mathcal{G}=(\bm{V},\bm{E})$ and a set of parameters $\Theta$. It is a factoring of a joint probability distribution $p_\mathcal{N}$ into conditional distributions for each variable, such that
$$p_\mathcal{N}(V_1,...,V_n)= \prod_{i=1}^n p_\mathcal{N}(V_i\vert \text{pa}(V_i)),$$
where the parents $\text{pa}(V_i)$ of $V_i$ are the set of variables $V_j$ such that $(V_j,V_i)\in \bm{E}$. $\Theta$ is the set of parameters of the form:
$$\theta_{v_i\vert \bm{u}_i}=p_{\mathcal{N}}(V_i=v_i \vert \bm{U}_i=\bm{u}_i),$$ 
    for each instantiation $v_i,\bm{u}_i$ of a variable $V_i$ and its parents $\bm{U}_i$.

Given a Bayesian network model, to obtain useful information about the distribution we will need to perform inference. For example, we might wish to obtain the probability $p_{\mathcal{N}}(\bm{W}=\bm{w})$ for some subset of variables $\bm{W} \subseteq \bm{V}$, a procedure known as marginalization. In the worst case, marginal inference in Bayesian networks is known to be \#P-complete, though many practical inference methods exist. % for inference in BNs.

Given a classifier, we can represent its input-output behaviour using a decision function $F:\bm{X}\to Y$, which observes some subset $\bm{X}\subseteq \bm{V}$ and tries to predict $Y \in \bm{V}$. To combine this with a Bayesian network environment model $\mathcal{N}$, we follow \cite{wang2021provable} in the construction of an \emph{augmented BN} $\mathcal{N}_F$, which is a Bayesian network based on $\mathcal{N}$ where an additional variable (node) $\hat{Y}$ is added with $\text{pa}(\hat{Y})=\bm{X}$ and $p_{\mathcal{N}_F}(\hat{Y}=\hat{y}\vert\bm{X}=\bm{x})=\mathds{1}[\hat{y}=F(\bm{x})]$. $\mathcal{N}_F$ is thus a unified model of environment and decision maker, and inference on the model can answer questions such as the prediction accuracy $p_{\mathcal{N}_F}(\hat{Y}=Y)$.

\section{Robust Inference on Bayesian Networks}

It is rarely the case that we can specify the parameters of a Bayesian network with complete certainty before performing inference. Firstly, whether the parameters are learned from data or elicited from expert knowledge, the knowledge that we obtain regarding the parameters is typically imprecise, specified as sets or intervals. Secondly, when the Bayesian network is imbued with a causal interpretation, one is often concerned about potential distribution shift, modelled by causal interventions, and their effect on inference queries.

As a running example, consider a fictional scenario depicted in Figure \ref{fig:treatment}, where patients are infected by some unobservable strain $S$ of a disease, with some strains much more severe than others, and a decision rule $F$ must be created based on observable symptoms $V$ and test results $R$ that decides whether to administer an expensive treatment option. 
While it is desirable to save resources by only administering treatment for the more severe strains, 
the result of denying treatment to a patient with a severe strain would be disastrous, so a guarantee is needed that the decision rule has a robustly low probability of an erroneous decision. 
To provide such a guarantee we model the system as an augmented BN $\mathcal{N}_F$ over variables $\{S,V,R,T\}$, where $T$ is binary and deterministic, given by $F(v,r)$. However, we do not have precise knowledge over the parameters of the BN, so we instead design intervals which specify the range of values a parameter could take. 
\begin{figure}
    \centering
    \begin{tikzpicture}[node distance={25mm}, thick, main/.style = {draw, circle}]
    \node[main, minimum size=1.2cm, align=center] (1) {\footnotesize Strain \\ \footnotesize $S$}; 
    \node[main, minimum size=1.2cm,align=center] (2) [above right of=1] {\footnotesize Symptoms \\ \footnotesize $V$};
    \node[main, minimum size=1.2cm,align=center] (3) [below right of=1] {\footnotesize Test \\ \footnotesize Results\\ \footnotesize $R$};
    \node[main, minimum size=1.2cm,align=center] (4) [below right of=2] {\footnotesize Treatment \\ \footnotesize $T$};
    
    \draw[->] (1) -- (2);
    \draw[->] (1) -- (3);
    \draw[->] (2) -- (4);
    \draw[->] (3) -- (4);
    \end{tikzpicture} 
    \caption{The DAG of an augmented BN modelling a simple fictional medical treatment scenario.}
    \label{fig:treatment}
\end{figure}
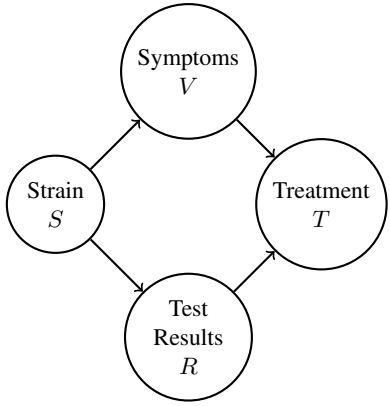

We start with $\theta_S$, the distribution over the strains. We expect that the decision rule will be deployed across a variety of areas and times, and as such we are concerned about distributional shifts in $\theta_S$. We could thus decide to allow any probability distribution across the strains, i.e. $\theta_{S=s} \in [0, 1]$. We imagine the tests used are very well understood, and we know $\theta_{R \vert s}$ exactly. Moving onto the parameters $\theta_{V \vert s}$, describing the symptoms of a particular strain, we might expect that these, unlike $\theta_S$, are relatively fixed across different settings. However, gathering enough data on each strain and symptom combination to be certain of this (fixed) parameter value might turn out to be challenging. In this case it might be suitable to take mean estimates of the parameter values $\theta^*_{V=v \vert s}$, and then select some confidence interval $[\theta^*_{V=v \vert s}-\epsilon,\theta^*_{V=v \vert s}+\epsilon]$.

\subsection{Credal Bayesian Networks}

To formalize the prior discussion, we use credal Bayesian networks \cite{maua2020thirty}, a framework that encompasses both causal interventions and imprecise knowledge of parameters.

\begin{definition}
Let $p_\Theta, \Theta \in \bm{\Theta}$, be any parameterised probability distribution, where $\bm{\Theta}$ is the set of allowed parameter values. Then we call $\bm{\mathcal{C}}\subseteq \bm{\Theta}$ a \emph{credal set} for this parameterisation, and the \emph{credal family} $\mathbb{C}_p[\bm{\mathcal{C}}]=\{p_{\Theta'}\vert \Theta'\in \bm{\mathcal{C}}\}$ is the family of distributions where the parameters are in $\bm{\mathcal{C}}$. 
\end{definition}

This is a maximally expressive formalism for credal uncertainty. However, an independence assumption between the uncertainty of different conditional distributions in a BN (sometimes known as the strong extension \cite{COZMAN2000199}) is usually assumed:

\begin{definition} \cite{COZMAN2000199}
A Credal BN (CrBN) $\mathbb{C}_{\mathcal{N}}[\bm{\mathcal{C}}]=\{\mathcal{N}_\Theta\ | \Theta \in \bm{\mathcal{C}}\}$ over a BN $\mathcal{N}_{\Theta}=(\mathcal{G},\Theta)$ is a credal family satisfying 
$$\bm{\mathcal{C}}= \prod_{V_i,\bm{u_i}} \mathcal{C}_{V_i\vert \bm{u_i}},$$
i.e. the credal set decomposes as a cartesian product of separate credal sets for each variable $V_i$ and instantiation of its parent variables $\bm{u_i}$.
\end{definition}

Since augmented Bayesian networks are simply Bayesian networks with an additional deterministic node (the decision function), we can convert any credal set over a Bayesian network model $\mathcal{N}$ to a credal set over $\mathcal{N}_F$ by maintaining the credal sets for all variables, while assuming the conditional distribution for the new variable $\hat{Y}$ is known exactly. This framework then fully generalizes the ``interventional robustness problem'' introduced by \cite{wang2021provable} to allow arbitrary credal sets for parameters; see Appendix for details.

\subsection{Problem Definition}
\label{sec:treatment}
In the treatment example we wished to guarantee the worst-case probability of an event occurring over a CrBN. We will now formalise this problem.   

\begin{definition}
Given an (augmented) CrBN $\mathbb{C}_{\mathcal{N}}[\bm{\mathcal{C}}]$

and an event $e$ (an instantiation of a subset of the variables), the maximum marginal probability ($\texttt{MAR}_{\max}$) problem is that of determining 
$$\texttt{MAR}_{\max}(\mathcal{N},\bm{\mathcal{C}},e)=\max_{\Theta \in \bm{\mathcal{C}}}p_{\mathcal{N}_{\Theta}}(e).$$

\end{definition}

This generalization of causal interventions enables many new problems to be considered, as causal interventions require parameters to be known exactly or be entirely unknown. Crucially, it allows us to model parameters which are estimated from data to be within some interval. It also allows the degree of uncertainty to depend on the value of the parents, as it might if some parent values are rare and lack data points.

As an illustration we now define a CrBN over $\mathcal{N}_F$ to formalize the treatment scenario in Figure \ref{fig:treatment}. We imagine there are three strains $s_1,s_2,s_3$, of which only $s_3$ is severe and requires treatment. We take $V$ and $R$ to be binary variables (symptomatic/asymptomatic and positive/negative test). We wish to be able to apply the decision rule in any situation where the prevalence of $s_3$ is at most $0.1$, so we assign $\mathcal{C}_S=\{\bm{\theta} \in Z_3\vert \theta_{S=s_3} <0.1\}$, where $Z_3$ is the three-dimensional probability simplex. We use singleton credal sets for $R$ and confidence intervals for $V$, with the values given in Table \ref{tb:credalsets}. The decision rule to be analysed gives treatment when $R=V$, since this is unlikely for $s_1$ and $s_2$.

\begin{table}
\footnotesize
\centering
\begin{tabular}{llll}
\toprule
                         & $s_1$                      & $s_2$                       & $s_3$   \\
                         \midrule
$\theta_R$ & 0.95                      & 0.05                       & 0.5   \\
$\theta_V$ & $0.2\pm 0.1$ & $0.8 \pm 0.1$ & $0.6 \pm 0.2$ \\
\bottomrule
\end{tabular}
\caption{Credal sets for symptom and test result parameters.}
\label{tb:credalsets}
\end{table}

This is an instance of the maximum marginal probability $\marmax$ problem, where the CrBN $\mathbb{C}_{\mathcal{N}}[\bm{\mathcal{C}}]$ is as specified above, and the event of interest is $e = (T = 0) \wedge (S = s_3)$.

\section{Credal Robustness via Probabilistic Circuits}

In this section, we present an efficient method for bounding $\marmax$ credal robustness for Bayesian networks with guarantees. In particular, the method returns an upper bound on $\marmax$. In the treatment example, this would mean that we can be certain that the probability of denying treatment to a patient with the severe strain does not exceed the computed value, assuming all parameters lie within the credal sets.  Our method is based upon establishing a correspondence between credal BNs and credal sum-product networks (CSPN) \cite{maua2017credal}, a recently proposed model which introduces uncertainty sets over the weights of a sum-product network. In particular, we develop an algorithm for \emph{compiling} CrBNs into equivalent CSPNs. By efficiently solving a similar credal maximization problem on the CSPN, we can derive upper bounds on $\marmax$ for the original CrBN. 

\subsection{Compilation to Arithmetic Circuits}

The first step of our method is to compile the credal Bayesian network to an arithmetic circuit. To describe this, we first consider an alternative representation of a Bayesian network. 

\begin{definition}
\cite{darwiche2003differential}
The \emph{network polynomial} of a BN $\mathcal{N}$ is defined as
$$l_{N}[\bm{\lambda},\Theta]=\sum_{v_1,...,v_n}\prod_{i=1}^n \theta_{v_i\vert u_i} \lambda_{v_i},$$
where $\lambda_{v_i}$ are indicator variables for variable $V_i$, which take the value 1 if $V_i = v_i$ and 0 otherwise.
\end{definition}

The network polynomial is a multilinear function which unambigously encodes the graphical structure of the Bayesian network, for any value of the parameters $\Theta$. In particular, one can obtain the joint probability $p_{\mathcal{N}}(v_1, ..., v_n)$ for any instantiation $v_1, ..., v_n$ by setting the indicator variables and evaluating the network polynomial. Unfortunately, it has an exponential number of terms in the number of variables of the BN, which means we cannot use it directly. The goal of compilation is to represent the network polynomial more efficiently, by exchanging sums and products where possible. The result of such a procedure can be interpreted as a rooted directed acyclic graph (DAG) called an arithmetic circuit.

\begin{definition}\cite{darwiche2003differential}
An arithmetic circuit (AC) $\mathcal{T}$
over variables $\bm{V}$ and parameters $\Theta$ is a rooted DAG, whose internal nodes are labelled
with $+$ or $\times$ and whose leaf nodes are labelled with indicator
variables $\lambda_v$

or non-negative parameters.
For an internal node $t$ we will write $\mathcal{T}_t$ for the arithmetic circuit containing $t$ and all its descendants. 
\end{definition}

\begin{definition}\cite{chan2006robustness}
A complete subcircuit
$\alpha$ of an AC is obtained by traversing the circuit top-down,
choosing one child of every visited $+$-node and all children
of every visited $\times$-node. The term $\emph{term}(\alpha)$ of $\alpha$ is the product of all leaf nodes visited (i.e. all indicator and parameter variables). The AC polynomial $l_{\mathcal{T}}[\bm{\mathcal{\lambda}},\Theta]$ is the sum of the terms of all complete subcircuits. 

\end{definition}

Compilation will produce an AC $\mathcal{T}$ which has the same polynomial as the BN, i.e. $l_{N}[\bm{\lambda},\Theta] = l_{\mathcal{T}}[\bm{\mathcal{\lambda}},\Theta]$. In addition, it will satisfy technical conditions called \emph{decomposability}, \emph{determinism} and \emph{smoothness}, which allow us to perform many inference queries in linear time in the size of the circuit.

In \cite{wang2021provable} a method is described for compiling an augmented BN to a smooth, decomposable and deterministic AC, which allows one to tractably compute marginal probabilities involving both the decision function and Bayesian network variables. In order to support further queries, they additionally impose ordering constraints on the AC.

\begin{definition}
\label{def:split}
 A $+$-node $t$ with children $t_1,...,t_n$ in an arithmetic circuit $\AC$ \emph{splits} on variable $V_i$ if there exists an ordering of the domain $v_i^1,...,v_i^n$ of $V_i$ such that all complete subcircuits of $\AC_{t_i}$ contain the indicator $\lambda_{v_i}$.

\end{definition}

\begin{definition}
\label{def:topord}
Let $\ordering = (V_1, ..., V_n)$ be a topological ordering of the variables in BN $\BN$. We say that an (smooth, decomposable, deterministic) arithmetic circuit $\AC$ computes the BN $\BN$ respecting $\ordering$ if:
\begin{enumerate}
    \item $l_{\mathcal{T}}[\bm{\lambda},\Theta]=l_\mathcal{N}[\bm{\lambda},\Theta]$
    \item Each $+$-node in $\mathcal{T}$ splits on some variable $V_i$. We define $\text{split}(t)$ for a $+$-node $t$ to be the variable it splits on.
    \item The variables are split respecting the topological order. That is, if
    $V_i =\text{split}(t), V_j=\text{split}(t')$, then $$t' \text{ is a descendant of }t \Longrightarrow j >i.$$ 
\end{enumerate}
\end{definition}

In other words, it is required that the AC represents the same polynomial as the BN, and further that the AC satisfies particular structural constraints that mean that the AC must split on parents before children. This leads to the following new result, which intuitively means that, when an AC splits on variable $V$, the values of its parents are already known.

\begin{restatable}{lemma}{lemParents}
\label{lem:parents}
Suppose that $\AC$ computes $\BN$ respecting some topological order. Let $\acnode$ be a $+$-node in $\AC$ splitting on some variable $V$. Then all complete subcircuits $\subcirc$ which include $\acnode$ must agree on the value of its parents $\text{pa}_{\mathcal{N}}(V)$.
\end{restatable}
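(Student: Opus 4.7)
The plan is to leverage smoothness, decomposability, determinism, and the topological ordering constraint to show that the assignment to $\text{pa}(V)$ is fixed for any complete subcircuit containing $t$. The starting observation is that, by smoothness, determinism, and the equality $l_{\mathcal{T}} = l_{\mathcal{N}}$, each complete subcircuit $\alpha$ of $\mathcal{T}$ corresponds to a unique full BN instantiation $(v_1, \ldots, v_n)$ and its term has the form $\prod_i \theta_{v_i|\bm{u}_i} \prod_i \lambda_{v_i}$. In particular, $\alpha$'s term contains the parameter leaf $\theta_{v|\bm{u}}$, where $v$ is $\alpha$'s value of $V$ and $\bm{u}$ is $\alpha$'s assignment to $\text{pa}(V)$, so it suffices to show that this $\bm{u}$ is the same for every $\alpha$ that includes $t$.

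Fix a parent $U \in \text{pa}(V)$. Since $U$ is topologically earlier than $V$, condition~3 of Definition~\ref{def:topord} forbids any $+$-node in $\mathcal{T}_t$ from splitting on $U$. Combined with smoothness, this already settles the easy case: if $U$ lies in the scope of $\mathcal{T}_t$, then $\lambda_u$ must appear in every complete subcircuit of $\mathcal{T}_t$, and absent any $+$-node splitting on $U$ in $\mathcal{T}_t$, the indicator must be the \emph{same} $\lambda_{u^*}$ across all such subcircuits; hence every $\alpha \ni t$ has $U = u^*$.

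The harder case is when $U \notin \text{scope}(\mathcal{T}_t)$, so $\lambda_u$ lies somewhere in $\alpha$ outside $\mathcal{T}_t$, reached through $\times$-ancestors of $t$. Since $t$ can sit below multiple $+$-ancestors in a DAG, two different $\alpha, \alpha' \ni t$ might in principle make different choices at those ancestors and thereby contain different $\lambda_u$, $\lambda_{u'}$. The plan to rule this out is to exploit the fact that $\alpha$ must contain the parameter leaf $\theta_{v|\bm{u}}$, which records $\bm{u}$ explicitly. Using determinism (so that for a fixed $v$ no two terms with different $\bm{u}$ may appear in the same $+$-alternative) together with decomposability (so that the parameter $\theta_{v|\bm{u}}$ cannot be freely multiplied by indicators for a different assignment to $\text{pa}(V)$), the location of $\theta_{v|\bm{u}}$ in the AC pins down the admissible assignments to $\text{pa}(V)$ for any $\alpha$ that picks up this leaf.

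I expect the main obstacle to be this last step: showing that the placement of the parameter leaves for $V$ in a smooth, decomposable, deterministic AC computing $l_{\mathcal{N}}$ is forced to be consistent with a single parent assignment at $t$. My intended route is an induction on the depth of $t$ in the circuit (or equivalently on the length of the longest $+$-path from the root to $t$), propagating the fact that each ancestor $+$-node splits only on topologically earlier variables and therefore, by the inductive hypothesis, already fixes the values of all the parents-of-parents by the time $t$ is reached. A short lemma stating that $\theta_{v|\bm{u}}$ can appear in a complete subcircuit only alongside indicators matching $\bm{u}$ (a direct consequence of equating $l_{\mathcal{T}}$ with the BN polynomial term-by-term under determinism) then closes the argument.
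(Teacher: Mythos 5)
Your setup is fine and matches the paper's starting point: the term correspondence between complete subcircuits and monomials of the network polynomial (your ``short lemma'' that $\theta_{v|\bm{u}}$ can only co-occur with indicators matching $\bm{u}$) is exactly what the paper also relies on, and your easy case ($U$ in the scope of $\mathcal{T}_t$, no $+$-node below $t$ splitting on $U$) mirrors the first half of the paper's argument. The problem is that the case you yourself flag as the hard one is where the actual proof lives, and the route you sketch for it does not close. Your induction on the depth of $t$ gives, as inductive hypothesis, that subcircuits through an ancestor $+$-node $t''$ agree on the parents of $\mathrm{split}(t'')$ --- i.e.\ on ``parents-of-parents'' --- but that says nothing about which \emph{value} of $U=\mathrm{split}(t'')$ is chosen on the branch leading to $t$; in a DAG, $t$ can be reachable through several different value-branches of $t''$, and ruling that out is precisely the statement of the lemma, so the induction is circular at the decisive point. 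Likewise, knowing that any subcircuit containing the leaf $\theta_{v|\bm{u}}$ must carry indicators matching $\bm{u}$ does not by itself force two \emph{different} subcircuits through $t$ to pick up $V$-parameters with the same $\bm{u}$: one can have $\alpha_1\ni\theta_{v_1|\bm{u}_1}$ and $\alpha_2\ni\theta_{v_2|\bm{u}_2}$ with $\bm{u}_1\neq\bm{u}_2$, each internally consistent, and nothing in your outline compares them.

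The missing idea is the paper's exchange (cut-and-paste) argument. Write $\alpha_1=(\alpha_{1_P},\alpha_{1_S})$ and $\alpha_2=(\alpha_{2_P},\alpha_{2_S})$, splitting each subcircuit into the part external to $\mathcal{T}_t$ and the part inside $\mathcal{T}_t$, and form the hybrid $\alpha_2'=(\alpha_{2_P},\alpha_{1_S})$, which is again a valid complete subcircuit because the top-down choices outside and inside $\mathcal{T}_t$ are independent. Since no node in $\mathcal{T}_t$ splits on $\text{pa}(V)$, the external part fixes the parent assignment, so $\alpha_1$ and $\alpha_2'$ carry $\bm{u}_1$ and $\bm{u}_2$ respectively; but the parameter for $V$ must lie inside $\mathcal{T}_t$ (its value of $V$ is only decided there), and $\alpha_1,\alpha_2'$ are identical inside $\mathcal{T}_t$, so by the term correspondence they would have to contain both $\theta_{v|\bm{u}_1}$ and $\theta_{v|\bm{u}_2}$ in the same shared suffix --- a contradiction. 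This cross-subcircuit splicing is what lets the parameter's location inside $\mathcal{T}_t$ ``talk to'' the indicators outside it; without it (or an equivalent device), your plan establishes constraints on each subcircuit separately but never forces agreement between them, so as written there is a genuine gap.
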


The AC compiled from the treatment example (Figure \ref{fig:treatment}) is too large to include in its entirety, but Figure \ref{fig:acexample} shows one branch from the root sum node, with $+$-nodes labelled with the variable they split on. Notice that the topological order $(S, R, V, T)$ is respected, and that, at every $+$-node, the value of the parents of the splitting variable are already ``known''.

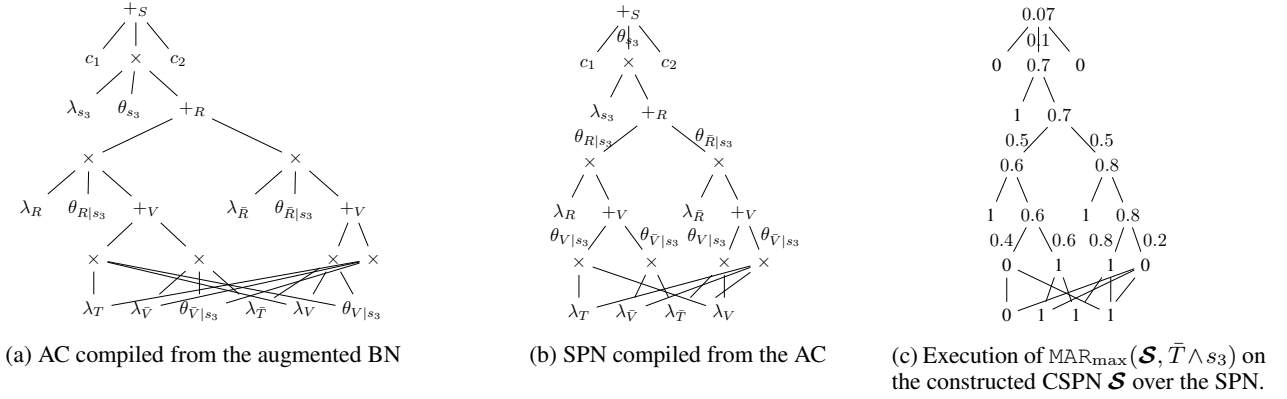
\begin{figure*}[t]
    \centering
    \begin{subfigure}[t]{0.4\textwidth}
        \centering
        \scalebox{0.7}{
        \begin{forest}
for tree={l sep=5pt}
[$+_S$
    [$c_1$
    ]
    [$\times$  
      [$\lambda_{s_3}$] 
      [$\theta_{s_3}$]
      [$+_R$
        [$\times$
            [$\lambda_R$]
            [$\theta_{R\vert s_3}$]
            [$+_V$
                [$\times$,name=rvbranch
                    [$\lambda_{T}$,name=t]
                ]
                [$\times$
                    [$\lambda_{\bar{V}}$,name=nv]
                    [$\theta_{\bar{V}\vert s_3}$,name=pnv]
                    [$\lambda_{\bar{T}}$,name=nt]
                ]
            ]
        ]
        [$\times$
            [$\lambda_{\bar{R}}$]
            [$\theta_{\bar{R}\vert s_3}$]
            [$+_V$
                [$\times$,name=vbranch
                    [$\lambda_V$,name=v]
                    [$\theta_{V\vert s_3}$,name=pv]
                ]
                [$\times$,name=nvbranch]
            ]
        ]
      ]
    ]
    [$c_2$
    ]
]
\draw (rvbranch) -- (v);
\draw (rvbranch) -- (pv);
\draw (vbranch) -- (nt);
\draw (nvbranch) -- (nv);
\draw (nvbranch) -- (pnv);
\draw (nvbranch) -- (t);
\end{forest}
        }
        \caption{AC compiled from the augmented BN}
        \label{fig:acexample}
    \end{subfigure}%
    ~ 
    \begin{subfigure}[t]{0.29\textwidth}
        \centering
        \scalebox{0.7}{
            \begin{forest}
for tree={l sep=5pt}
[$+_S$
    [$c_1$
    ]
    [$\times$,edge label={node[midway] {$\theta_{s_3}$}}  
      [$\lambda_{s_3}$] 
      [$+_R$
        [$\times$,edge label={node[midway,left] {$\theta_{R\vert s_3}$}}
            [$\lambda_R$]
            [$+_V$
                [$\times$,name=rvbranch,edge label={node[midway,left] {$\theta_{V\vert s_3}$}}
                    [$\lambda_{T}$,name=t]
                ]
                [$\times$,edge label={node[midway,right] {$\theta_{\bar{V}\vert s_3}$}}
                    [$\lambda_{\bar{V}}$,name=nv]
                    [$\lambda_{\bar{T}}$,name=nt]
                ]
            ]
        ]
        [$\times$,edge label={node[midway,right] {$\theta_{\bar{R}\vert s_3}$}}
            [$\lambda_{\bar{R}}$]
            [$+_V$
                [$\times$,name=vbranch,edge label={node[midway,left] {$\theta_{V\vert s_3}$}}
                    [$\lambda_V$,name=v]
                ]
                [$\times$,name=nvbranch,edge label={node[midway,right] {$\theta_{\bar{V}\vert s_3}$}}]
            ]
        ]
      ]
    ]
    [$c_2$
    ]
]
\draw (rvbranch) -- (v);
\draw (vbranch) -- (nt);
\draw (nvbranch) -- (nv);
\draw (nvbranch) -- (pnv);
\draw (nvbranch) -- (t);
\end{forest}
        }
        \caption{SPN compiled from the AC}
        \label{fig:spnexample}
    \end{subfigure}
     ~ 
    \begin{subfigure}[t]{0.28\textwidth}
        \centering
        \scalebox{0.7}{
        \begin{forest}
for tree={l sep=5pt}
[$0.07$
    [0
    ]
    [$0.7$,edge label={node[midway] {$0.1$}}  
      [1] 
      [$0.7$
        [$0.6$,edge label={node[midway,left] {$0.5$}}
            [1]
            [$0.6$
                [0,name=rvbranch,edge label={node[midway,left] {$0.4$}}
                    [0,name=t]
                ]
                [1,edge label={node[midway,right] {$0.6$}}
                    [1,name=nv]
                    [1,name=nt]
                ]
            ]
        ]
        [$0.8$,edge label={node[midway,right] {$0.5$}}
            [1]
            [$0.8$
                [1,name=vbranch,edge label={node[midway,left] {$0.8$}}
                    [1,name=v]
                ]
                [0,name=nvbranch,edge label={node[midway,right] {$0.2$}}]
            ]
        ]
      ]
    ]
    [0
    ]
]
\draw (rvbranch) -- (v);
\draw (vbranch) -- (nt);
\draw (nvbranch) -- (nv);
\draw (nvbranch) -- (pnv);
\draw (nvbranch) -- (t);
\end{forest}
        }
        \caption{Execution of $\texttt{MAR}_{\max}(\bm{\mathcal{S}},\bar{T}\land s_3)$ on the constructed CSPN $\bm{\mathcal{S}}$ over the SPN. 
        }
        \label{fig:spnmaximization}
    \end{subfigure}
    \caption{Illustration of Algorithm \ref{alg:upper} for the treatment example. Due to space constraints we show only the $S=s_3$ branch of the AC/SPN.}
\end{figure*}

\subsection{Compiling to Credal SPNs}

While this compiled AC allows us to efficiently compute marginals for given parameter values $\Theta$, it does not effectively represent credal sets, and thus finding maximizing parameter values is challenging (one would need to solve constraints potentially spread out across the whole circuit). 

In the next step of our method, we further compile the AC to a sum-product network (SPN). SPNs differ from ACs in that they lack parameter nodes and instead have parameters (i.e. weights) associated with branches from sum nodes.

\begin{definition} \cite{poon2011spn}
A sum-product network  (SPN) over variables $\bm{V}$ and with weights $W$ is a rooted DAG whose internal nodes are labelled with either $+$ or $\times$, and whose leaf nodes are labelled  with indicator variables $\lambda_v$. The branches of a sum node $t_i$ with $k$ branches are labelled with weights $w_{i,1},..,w_{i,k}$. 
\end{definition}

\begin{definition}
A complete subcircuit $\alpha$ of an SPN $S$ is obtained by traversing the circuit top-down,
choosing one child of every visited $+$-node and all children
of every visited $\times$-node. 
The term $\emph{term}(\alpha)$ of $\alpha$ 
is the product of all leaf nodes visited (i.e. all indicators) 
and all weights $w_{ij}$ along branches chosen by the subcircuit.

The SPN-polynomial $l_{\mathcal{S}}[\bm{\mathcal{\lambda}},W]$ is the sum of the terms of all complete subcircuits.
\end{definition}

Our compilation differs from that presented in \cite{rooshenas2014learning} in that we make use of the particular structure of the AC, shown in Lemma \ref{lem:parents}, to make sure the weights on the sum nodes directly correspond to the parameters in the BN (which would not be the case under standard compilation). 

At a high level, the compilation only involves two steps:

\begin{enumerate}
    \item For each sum node $t$ splitting on $V_i$, assign weights over branches according to $\bm{\theta}_{V_i\vert u_i}$, where all variables in $u_i$ are known due to Lemma \ref{lem:parents}.
    \item Remove all parameter nodes. 
\end{enumerate}

To algorithmically decide which parameters correspond to a particular sum node we construct a notion of 'possible values' for variables at nodes in the SPN. We first say that a node 'conditions' on $V=v$ if the node is a parameter node $\theta_{W=w|V=v,\bm{U}=\bm{u}}$, and define $$\bm{P}_t(V)=\{v: \exists t' \in \text{descendants}(t), t' \text{ conditions on } V=v \} .$$ \begin{corollary}
For any sum node $t$ splitting on $V$, if $W$ is a parent of $V$ then $\bm{P}_t(W)$ must contain exactly one possible value. 
\end{corollary}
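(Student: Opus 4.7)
The plan is to show both containments: that $\bm{P}_t(W)$ is nonempty, and that it contains at most one value. The nonemptiness is essentially by construction: since $t$ splits on $V$ and the AC polynomial equals the network polynomial, every value $v$ of $V$ must appear among the terms of subcircuits rooted at children of $t$, and each such term (by the form of the network polynomial) contains a parameter factor $\theta_{V=v \mid W=w, \ldots}$ for some $w$. Any such parameter node is a descendant of $t$ and conditions on $W=w$, witnessing $w \in \bm{P}_t(W)$.

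The substance of the proof is uniqueness. First I would establish a bridging lemma: if a complete subcircuit $\subcirc$ of $\AC$ contains a parameter leaf $\param_{*\mid W=w, *}$, then $\subcirc$ must also contain the indicator $\lambda_{w}$. This is where smoothness, decomposability, and determinism come in: these conditions guarantee that complete subcircuits of $\AC$ are in one-to-one correspondence with the monomials of $l_{\mathcal{N}}$, and each monomial of $l_{\mathcal{N}}$ pairs the factor $\param_{V_i=v_i\mid \bm{U}_i=\bm{u}_i}$ with exactly the indicators $\lambda_{v_i}$ compatible with the assignment it represents. In particular, the indicator $\lambda_w$ for the parent instantiation must appear.

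With this bridge, uniqueness follows from Lemma~\ref{lem:parents}. Suppose for contradiction that $w_1 \neq w_2$ both lie in $\bm{P}_t(W)$, witnessed by descendant parameter nodes $\param^{(1)}$ and $\param^{(2)}$ of $t$. Each of these leaves is reachable from the root through $t$, so there exist complete subcircuits $\subcirc_1, \subcirc_2$ each containing $t$ together with $\param^{(1)}$ or $\param^{(2)}$ respectively. By the bridging observation, $\subcirc_i$ contains $\lambda_{w_i}$, so $\subcirc_1$ and $\subcirc_2$ assign $W$ to different values. But Lemma~\ref{lem:parents} forces all complete subcircuits containing $t$ to agree on $\text{pa}_{\mathcal{N}}(V) \ni W$, yielding the contradiction.

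The main obstacle is the bridging lemma, i.e.\ formally transferring information from the syntactic subscript of a parameter leaf to the indicator variable chosen in the same subcircuit. The other ingredients are either explicit in the hypotheses or immediate from definitions; everything hinges on the determinism/decomposability/smoothness apparatus ensuring that complete subcircuit terms mirror the monomials of the network polynomial faithfully.
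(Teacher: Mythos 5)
Your plan is correct and follows essentially the same route as the paper's proof: non-emptiness comes from the fact that every complete subcircuit through $t$ corresponds to a monomial of the network polynomial and hence contains a parameter $\theta_{V\mid W=w,\ldots}$ sitting inside $\AC_t$, and uniqueness follows from Lemma~\ref{lem:parents}; your ``bridging lemma'' (parameter subscript $W=w$ forces the indicator $\lambda_w$ in the same subcircuit) is exactly the subcircuit-term/monomial correspondence the paper invokes implicitly both here and in the proof of Lemma~\ref{lem:parents}. The only point you assert that the paper argues is why that $V$-parameter must be a descendant of $t$ rather than elsewhere in the subcircuit (the paper notes it cannot occur outside $\AC_t$ without violating the split condition of Definition~\ref{def:split} together with the polynomial equality), but this is a one-clause addition, not a different approach.
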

\begin{proof}
Any complete subcircuit corresponds to a term of the network polynomial, and must contain a parameter $\theta_{V|w_i,\bm{u_i}}$ for some $w_i,\bm{u_i}$. This parameter cannot occur as an ancestor of $t$, since it would then be impossible to satisfy Definition \ref{def:split}, and so it must be a descendant. Thus $\bm{P}_t(V)$ is non-empty. By Lemma 1, it cannot contain more than 1 element. 
\end{proof}

These sets uniquely determine the values of all parents, and thus which parameters to use. The sets can be efficiently computed , as described in Algorithm \ref{alg:SPN}.

Figure \ref{fig:spnexample} shows the result for the AC in Figure \ref{fig:acexample}.

\begin{algorithm}
\footnotesize

\SetAlgoLined
\KwInput{AC $\mathcal{T}$ computing $\mathcal{N}$ and satisfying Definition \ref{def:topord}.
}
\KwResult{An SPN computing $\mathcal{N}$ where all sum node weights correspond to a CPT $\bm{\theta}_{V_i|u_i}$}
\Begin{
For indicator nodes $t$, assign $\bm{P}_t(V)=\{\}$; \\
For parameter nodes $\theta_{v|u_1,u_2,...,}$, assign $\bm{P}_t(U_i)=\{u_i\}$ if $U_i\in \text{pa}(V)$ and $\bm{P}_t(U_i)=\{\}$ otherwise; \\
For inner nodes $t$ compute $\bm{P}_t(V)=\bigcup_{c\in \text{children}(t)} \bm{P}_{c}(V)$; \\
For sum nodes $t$, label the edges according to $\bm{\theta}_{V|u_1,u_2,...}$, where $t$ splits on $V$ and $u_i$ is the unique value in $\bm{P}_t(U_i)$; \\
Remove all parameter nodes. \\
}
\caption{SPN compilation from AC}
\label{alg:SPN}
\end{algorithm}

\begin{restatable}{proposition}{propSPN}
\label{prop:SPNeq}
Given an arithmetic circuit $\mathcal{T}$ which computes $\BN$ satsifying some topological order, the SPN $\mathcal{S}$ compiled as above satisfies $$l_{\mathcal{T}}[\bm{\lambda},\Theta]=l_{\mathcal{S}}[\bm \lambda, \Theta].$$
\end{restatable}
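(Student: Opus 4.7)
The plan is to establish a term-by-term correspondence between complete subcircuits of $\mathcal{T}$ and those of $\mathcal{S}$, based on the observation that the compilation only deletes parameter leaves (which sit under $\times$-nodes and are not ``chosen'' anywhere) and decorates sum-node branches with weights. First I would argue that the DAGs of $\mathcal{T}$ and $\mathcal{S}$ are identical except for the removed parameter leaves, so the choices that define a complete subcircuit (one child per $+$-node visited) are in bijection: each complete subcircuit $\alpha$ of $\mathcal{T}$ maps canonically to a complete subcircuit $\alpha'$ of $\mathcal{S}$ making the same choices at the corresponding sum nodes, and vice versa. Since indicator leaves are untouched by the compilation, $\alpha$ and $\alpha'$ visit the same indicators; hence $\text{term}(\alpha) = \text{term}(\alpha')$ reduces to showing that the product of parameter leaves visited by $\alpha$ equals the product of branch-weights picked up by $\alpha'$.

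To match these two products, I would pair each sum node $t$ visited by $\alpha'$ (say, splitting on $V$, with the chosen branch corresponding to $V=v$, carrying weight $\theta_{V=v\mid \bm{u}}$ with $\bm{u}$ the tuple of unique values in $\bm{P}_t(U_j)$ over parents $U_j$ of $V$) with a parameter leaf $\theta_{V=v\mid \bm{u}'}$ visited by $\alpha$ below the chosen branch. Two facts close this step. Decomposability of $\mathcal{T}$ plus the splitting property of $t$ forces the parameter leaf of the form $\theta_{V=v\mid \cdot}$ in $\alpha$ to lie inside the subtree rooted at the chosen child of $t$, hence to be a descendant of $t$. Corollary~1 (and the definition of $\bm{P}_t$) then forces this leaf's parent assignment $\bm{u}'$ to coincide with the unique tuple $\bm{u}$, so the weight and the parameter agree as polynomial symbols.

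The main obstacle is verifying that this pairing is actually a bijection, i.e.\ that every complete subcircuit visits exactly one sum node splitting on each BN variable $V$, and exactly one parameter leaf of the form $\theta_{V=v\mid \cdot}$, with the parameter lying strictly below the sum node. I would obtain this from the hypothesis $l_{\mathcal{T}}[\bm{\lambda},\Theta] = l_{\mathcal{N}}[\bm{\lambda},\Theta]$ together with smoothness and determinism: each term of $l_\mathcal{N}$ is of the form $\prod_{i=1}^n \theta_{v_i\mid \bm{u}_i}\lambda_{v_i}$ with exactly one parameter per variable, and determinism guarantees every complete subcircuit of $\mathcal{T}$ produces exactly one such term. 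The topological-order condition (Definition~\ref{def:topord}) together with the splitting condition (Definition~\ref{def:split}) then forces a unique sum node on the path that splits on $V$, and its chosen branch is precisely the one containing $\lambda_v$ and, therefore, $\theta_{V=v\mid \cdot}$.

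Once this bijection and value-matching are established, summing over all complete subcircuits gives
\[
l_{\mathcal{S}}[\bm{\lambda},\Theta] \;=\; \sum_{\alpha'} \text{term}(\alpha') \;=\; \sum_{\alpha} \text{term}(\alpha) \;=\; l_{\mathcal{T}}[\bm{\lambda},\Theta],
\]
completing the proof. I expect the bijection argument (rather than the value-matching, which is a direct corollary of Lemma~\ref{lem:parents} and its corollary) to be the delicate step, and I would be careful to invoke determinism of $\mathcal{T}$ explicitly to rule out pathological subcircuits that visit the ``wrong'' parameter leaf or multiple leaves associated with the same variable.
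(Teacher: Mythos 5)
Your proposal is correct and follows essentially the same route as the paper's proof: a one-to-one correspondence between complete subcircuits of $\mathcal{T}$ and $\mathcal{S}$ given by the branch choices at sum nodes, with each parameter leaf $\theta_{V=v\mid \bm{u}}$ matched to the weight on the chosen branch of the unique $+$-node splitting on $V$, so that corresponding terms coincide. The only difference is that you spell out details the paper leaves implicit (the uniqueness of the parent instantiation via $\bm{P}_t$ and Corollary~1, and the exactly-one-parameter-per-variable claim via the polynomial identity $l_{\mathcal{T}}=l_{\mathcal{N}}$), which is a faithful elaboration rather than a different argument.
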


\begin{proof}
We can put the complete subcircuits of $\mathcal{T}$ and $\mathcal{S}$ in a one-to-one correspondence by the choice of branch at each sum node (since only the parameter nodes/weights have changed). Let $\alpha_T$ be a subcircuit of $\AC$, and $\alpha_S$ the corresponding subcircuit of $\SPN$. For every variable $V$, $\alpha_T$ contains exactly one $+$-node splitting on $V$, and exactly one parameter of the form $\theta_{V|pa(V)}$. The compilation procedure moves this parameter to be a weight of the $+$-node splitting on $V$, so that the overall term is unchanged. Applying this to all variables, we have that $\text{term}(\alpha_T) = \text{term}(\alpha_S)$, and thus the result.
\end{proof}

For credal families over SPNs satisfying an independence requirement between all sum nodes (such that knowing the weights of one sum node does not affect your uncertainty over other weights), $\texttt{MAR}_{\max}$ can be computed efficiently. These are exactly the Credal SPNs introduced in \cite{maua2017credal}. 

\begin{definition} \cite{maua2017credal}
A Credal SPN (CSPN) is a credal family $\mathbb{C}_S[\bm{\mathcal{C}}]$ over an SPN $S$ satisfying 
$$\bm{\mathcal{C}}= \prod_{i=1}^n \mathcal{C}_{i},$$
where $\mathcal{C}_i$ is a subset of a probability simplex on the weights of sum node $i$.
\end{definition}

We can construct a credal family over the compiled SPN, which is equivalent to our CrBN, by requiring all sum nodes that split on a variable $V_i$ and have parents $\bm{u_i}$ to \textit{(i)} all have the same weights and \textit{(ii)} have that weight be in $\mathcal{C}_{V_i \vert u_i}$. However, this will not in general be a CSPN, since $(i)$ breaks the independence requirement (observing the weights of one sum node will change the credal set for a different sum node if they both split on the same variable with the same values of the parents). 

We can, however, construct a CSPN by removing requirement $(i)$, which creates a strictly larger credal family. This relaxation is the final step of our compilation process.

\begin{lemma}
\label{lem:bound}
For a CrBN $\mathbb{C}_{\mathcal{N}}[\bm{\mathcal{C}}_{\mathcal{N}}]$ and its compiled CSPN $\mathbb{C}_{\mathcal{S}}[\bm{\mathcal{C}}_{\mathcal{S}}]$,
$$\max_{\Theta\in \bm{\mathcal{C}}_\mathcal{S}} l_S[\bm{\lambda},\Theta] \geq \max_{\Theta' \in \bm{\mathcal{C}}_\mathcal{N}}l_{\mathcal{N}}[\bm{\lambda},\Theta'].$$
\end{lemma}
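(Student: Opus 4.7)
The plan is to exhibit an embedding of the CrBN credal family into the CSPN credal family that preserves the value of the circuit polynomial, so the supremum on the right-hand side is automatically dominated by the supremum on the left.

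First, I would fix an arbitrary $\Theta' \in \bm{\mathcal{C}}_{\mathcal{N}}$. By the definition of a CrBN, this means that for every variable $V_i$ and instantiation $\bm{u}_i$ of its parents, the conditional distribution $\bm{\theta}'_{V_i \mid \bm{u}_i}$ lies in $\mathcal{C}_{V_i \mid \bm{u}_i}$. I would then construct a corresponding parameter assignment $\Theta \in \bm{\mathcal{C}}_{\mathcal{S}}$ for the CSPN as follows: at every sum node $t$ of $\mathcal{S}$ that splits on $V_i$, Lemma~\ref{lem:parents} tells us there is a unique value $\bm{u}_i$ of $\text{pa}_{\mathcal{N}}(V_i)$ consistent with $t$; I assign the branch weights of $t$ to be $\bm{\theta}'_{V_i \mid \bm{u}_i}$. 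Because $\bm{\theta}'_{V_i \mid \bm{u}_i} \in \mathcal{C}_{V_i \mid \bm{u}_i}$, these weights lie in the credal set for sum node $t$ by construction of $\bm{\mathcal{C}}_{\mathcal{S}}$, hence $\Theta \in \bm{\mathcal{C}}_{\mathcal{S}}$.

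Next, I would verify that this embedding preserves the polynomial value. Observe that the assignment I just defined is exactly the one produced by the compilation procedure of Algorithm~\ref{alg:SPN} when applied to the AC carrying parameters $\Theta'$. Hence by Proposition~\ref{prop:SPNeq}, $l_{\mathcal{S}}[\bm{\lambda}, \Theta] = l_{\mathcal{T}}[\bm{\lambda}, \Theta']$, and by construction of the AC, $l_{\mathcal{T}}[\bm{\lambda}, \Theta'] = l_{\mathcal{N}}[\bm{\lambda}, \Theta']$. Chaining these equalities yields
$$l_{\mathcal{S}}[\bm{\lambda}, \Theta] = l_{\mathcal{N}}[\bm{\lambda}, \Theta'].$$

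Finally, since $\Theta \in \bm{\mathcal{C}}_{\mathcal{S}}$, we have $\max_{\Xi \in \bm{\mathcal{C}}_{\mathcal{S}}} l_{\mathcal{S}}[\bm{\lambda}, \Xi] \geq l_{\mathcal{S}}[\bm{\lambda}, \Theta] = l_{\mathcal{N}}[\bm{\lambda}, \Theta']$. Taking the supremum over $\Theta' \in \bm{\mathcal{C}}_{\mathcal{N}}$ on the right gives the desired inequality. The only subtle point, and the one I would be most careful about, is justifying that the assignment "copy $\bm{\theta}'_{V_i \mid \bm{u}_i}$ onto every sum node splitting on $V_i$ whose implicit parent values are $\bm{u}_i$" is indeed admissible in $\bm{\mathcal{C}}_{\mathcal{S}}$; this is precisely where the relaxation of requirement $(i)$ (dropping the tying constraint across sum nodes) matters, and where Lemma~\ref{lem:parents} is essential to guarantee that the parent instantiation $\bm{u}_i$ associated with each sum node is well-defined.
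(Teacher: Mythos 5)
Your proof is correct and follows essentially the same route as the paper's: embed each $\Theta' \in \bm{\mathcal{C}}_{\mathcal{N}}$ into $\bm{\mathcal{C}}_{\mathcal{S}}$ by copying $\bm{\theta}'_{V_i \mid \bm{u}_i}$ onto every sum node splitting on $V_i$ (well-defined by Lemma~\ref{lem:parents}), note that the relaxed per-node constraints are then satisfied, and use Proposition~\ref{prop:SPNeq} to equate the polynomial values before taking the maximum. Your write-up simply makes explicit the embedding and admissibility check that the paper's shorter argument states in one line.
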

\begin{proof}
For any given $\Theta\in\bm{\mathcal{C}}_{\mathcal{N}}$ we have $l_S[\bm{\lambda},\Theta]=l_{\mathcal{N}}[\Theta,\bm{\lambda}]$, by Proposition \ref{prop:SPNeq} and the fact that $\mathcal{S}$ computes $\BN$. The only way to violate the inequality is if $\Theta \notin \bm{\mathcal{C}}_{S}$. But $\bm{\mathcal{C}}_{S}$ only demands that at each sum node $t$ splitting on $V_i$ the parameters are in $\mathcal{C}_{V_i \vert u_i}$, which will certainly be true if $\Theta\in\bm{\mathcal{C}}_{\mathcal{N}}$. 
\end{proof}

If we apply this to construct a CSPN for our treatment example, it will be a CSPN over the SPN in Figure \ref{fig:spnexample}, where the weights of the sum nodes are constrained by the credal sets of the CrBN defined in Section \ref{sec:treatment}.

\subsection{Solving \texorpdfstring{$\texttt{MAR}_{\max}$}{MAR\_max}}
Analogously to CrBNs, we can define the maximum marginal probability problem for CSPNs as $\texttt{MAR}_{\max}(\mathcal{S},\bm{\mathcal{C}}, e) = \max_{\Theta \in \bm{\mathcal{C}}} l_{\mathcal{S}}[\bm{\lambda}_e, \Theta]$,
where $\bm{\lambda}_e$ refers to the appropriate instantiation of the indicators for the event $e$.

While $\texttt{MAR}_{\max}$ for the AC (and BN) is intractable, we can compute it efficiently for a CSPN \cite{maua2017credal}.

\begin{proposition}
\label{prop:maxspn}
Given a Credal SPN $\mathbb{C}_{\mathcal{S}}[\prod_{i=1}^n \mathcal{C}_{i}]$, we can solve $\texttt{MAR}_{\max}$ for this family of distributions in $\mathcal{O}(\vert \mathcal{S}\vert L)$, where $L$ is an upper bound on solving $\max_{w_i}\sum_j w_{ij}c_{j}$ subject to $w_i\in C_i$. 
\end{proposition}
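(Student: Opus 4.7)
The plan is to establish the proposition by exhibiting a bottom-up dynamic programming algorithm that traverses the SPN from leaves to root, computing at each node $t$ the quantity $\mathrm{val}(t) := \max_{W \in \prod_i \mathcal{C}_i} l_{\mathcal{S}_t}[\bm{\lambda}_e, W]$, where $\mathcal{S}_t$ denotes the sub-SPN rooted at $t$. The answer to $\texttt{MAR}_{\max}$ is then $\mathrm{val}$ at the root. The recursive definition: at an indicator leaf $\lambda_v$, set $\mathrm{val}(t)$ to $1$ if $v$ is consistent with the event $e$ and $0$ otherwise; at a product node, set $\mathrm{val}(t) = \prod_k \mathrm{val}(t_k)$; at a sum node $t_i$ with children $t_{i,1},\ldots,t_{i,k_i}$ and weight vector $w_i$ constrained to $\mathcal{C}_i$, set
\[
\mathrm{val}(t_i) \;=\; \max_{w_i \in \mathcal{C}_i} \sum_{j} w_{i,j}\,\mathrm{val}(t_{i,j}).
\]
This last step is exactly the local optimization problem whose cost is bounded by $L$ in the statement.

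Correctness would be proved by structural induction on the SPN, showing $\mathrm{val}(t)$ equals the true maximum of $l_{\mathcal{S}_t}[\bm{\lambda}_e, W]$ over $W \in \prod_i \mathcal{C}_i$. The base case is immediate. At a sum node $t_i$, the key observation is that the credal independence assumption $\bm{\mathcal{C}} = \prod_i \mathcal{C}_i$ decouples the constraint on $w_i$ from those on all other weights, so the outer maximum over $w_i$ commutes with the inner maximum over the remaining weights that appear inside the children's sub-polynomials. At a product node, the maximum distributes over the product because all SPN polynomial values are non-negative (weights and indicators are non-negative) and, once again, the credal independence factorizes the constraint on the weights appearing under different children.

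Complexity then follows by inspection: indicator leaves and product nodes cost $\mathcal{O}(1)$ per incoming edge, and each sum node costs at most $L$ for its local optimization. Summing over the nodes and edges yields the claimed $\mathcal{O}(|\mathcal{S}|L)$ bound.

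The step I expect to require the most care is verifying correctness when $\mathcal{S}$ is a genuine DAG rather than a tree, since a sub-SPN may lie below several parent nodes and the weight vector at a shared sum node must take a single value that is optimal from every context in which it appears. The independence of the credal sets is precisely what resolves this: each sum node has its own credal set $\mathcal{C}_i$, any joint assignment feasible for the whole circuit restricts to a feasible assignment for every sub-SPN, and conversely the locally optimal weight choices can be assembled into a single globally feasible assignment. Once this consistency argument is in place, the inductive step at sum and product nodes goes through unchanged, completing the proof.
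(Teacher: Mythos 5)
Your proposal is correct and follows essentially the same route as the paper: a bottom-up induction in which each sum node is maximized locally over its own credal set $\mathcal{C}_i$ (the $\mathcal{O}(L)$ step) and product nodes multiply their children's maxima, yielding the $\mathcal{O}(|\mathcal{S}|L)$ bound; the paper states exactly this and defers the remaining details to Mau\'a et al.\ (2017). Your additional care with the DAG/shared-node case (a single globally feasible assignment assembled from the local optima, plus non-negativity for the upper-bound direction) is precisely the detail the paper leaves to that reference.
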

\begin{proof}
If we assume the maximum possible value of the children $c_1,...,c_j$ of a sum node $t_i$ are known, finding the maximum possible value of $t_i$ can be done by solving $\max_{w_i}\sum_j w_{ij}c_{j}$ subject to $w_i\in C_i$. The same is true for product nodes, with the maximum value being the product of the maximum values of its children. By induction we can find the maximum possible value of the root node through bottom-up evaluation. For details see \cite{maua2017credal}.
\end{proof}
Figure \ref{fig:spnmaximization} illustrates the computation on the CSPN compiled from the treatment example. The algorithm evaluates nodes bottom up in the graph, with the indicators set to their appropriate value ($\lambda_T=\lambda_{s_1}=\lambda_{s_2}=0$, the rest 1). The $s_1$, $s_2$ branches always lead to an indicator with the value 0. When a sum node is reached, the maximizing weights allowed by the credal set at that sum node are picked. For the left $+_V$ node this means assigning $\theta_{V\vert s_3}$ the lowest weight allowed ($0.4$), while the right $+_V$ is instead maximized with the highest weight allowed ($0.8$). No choice is made at $+_R$ since it is a singleton, and at $+_S$ the maximum weight for $s_3$ ($0.1$) is chosen. This demonstrates that $\texttt{MAR}_{\max}(S, \bm{\mathcal{C}},\bar{T}\land s_3)=0.07$.

Our overall method \textbf{CUB} is summarized in Algorithm \ref{alg:upper}. The following theorem, which follows directly from Lemma \ref{lem:bound}, shows that we do indeed return an upper bound:

\begin{algorithm}
\footnotesize

\SetAlgoLined
\KwInput{Credal Bayesian Network $\mathbb{C}_{\mathcal{N}}[\bm{\mathcal{C}}_{\mathcal{N}}]$, event $e$, order $\sigma$
}
\KwResult{Upper bound on $\marmax(\mathcal{N}, \bm{\mathcal{C}}_{\mathcal{N}}, e)$}
\Begin{
Compile $\mathcal{N}$ to AC obeying topological order $\sigma$;\\
Construct a credal SPN $\mathbb{C}_{\mathcal{S}}[\bm{\mathcal{C}}_{\mathcal{S}}]$ from the AC;\\
Compute $\texttt{MAR}_{\max}(S, \bm{\mathcal{C}}_\mathcal{S}, e)$ for this credal SPN;\\
}
\textbf{Return} $\texttt{MAR}_{\max}(S, \bm{\mathcal{C}}_\mathcal{S}, e)$
\caption{Upper Bounding for $\marmax$}
\label{alg:upper}
\end{algorithm}

\begin{theorem}

\label{thm:alg}
The
output $\texttt{MAR}_{\max}(\mathcal{S}, \bm{\mathcal{C}}_{\mathcal{S}}, e)$ returned by Algorithm \ref{alg:upper} satisfies $$\texttt{MAR}_{\max}(S, \bm{\mathcal{C}}_{\mathcal{S}}, e)\geq \texttt{MAR}_{\max}(\mathcal{N}, \bm{\mathcal{C}}_{\mathcal{N}},e)$$

\end{theorem}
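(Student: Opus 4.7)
The plan is to observe that this theorem is essentially a direct rephrasing of Lemma \ref{lem:bound} in the language of the $\marmax$ problem, so the proof will be little more than unfolding definitions and invoking that lemma with the right choice of indicator setting.

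First, I would expand the right-hand side. By the defining property of the network polynomial, for any $\Theta \in \bm{\mathcal{C}}_{\mathcal{N}}$ we have $p_{\mathcal{N}_\Theta}(e) = l_{\mathcal{N}}[\bm{\lambda}_e, \Theta]$, where $\bm{\lambda}_e$ is the indicator setting that assigns $1$ to indicators consistent with $e$ and $0$ otherwise. Hence
\[
\marmax(\mathcal{N}, \bm{\mathcal{C}}_{\mathcal{N}}, e) = \max_{\Theta' \in \bm{\mathcal{C}}_{\mathcal{N}}} l_{\mathcal{N}}[\bm{\lambda}_e, \Theta'].
\]

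Next, I would expand the left-hand side using the definition given just before Proposition \ref{prop:maxspn}:
\[
\marmax(\mathcal{S}, \bm{\mathcal{C}}_{\mathcal{S}}, e) = \max_{\Theta \in \bm{\mathcal{C}}_{\mathcal{S}}} l_{\mathcal{S}}[\bm{\lambda}_e, \Theta].
\]
Now the theorem reduces to showing
\[
\max_{\Theta \in \bm{\mathcal{C}}_{\mathcal{S}}} l_{\mathcal{S}}[\bm{\lambda}_e, \Theta] \;\geq\; \max_{\Theta' \in \bm{\mathcal{C}}_{\mathcal{N}}} l_{\mathcal{N}}[\bm{\lambda}_e, \Theta'],
\]
which is exactly the statement of Lemma \ref{lem:bound} instantiated at $\bm{\lambda} = \bm{\lambda}_e$. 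Applying the lemma finishes the proof.

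There is essentially no obstacle here because the real work has already been done: Proposition \ref{prop:SPNeq} guarantees the polynomial-level equivalence $l_{\mathcal{T}} = l_{\mathcal{S}}$, and the relaxation step that drops the tying-together of identically-conditioned sum node weights guarantees the containment of the BN-compatible parameter assignments inside $\bm{\mathcal{C}}_{\mathcal{S}}$, which together yield Lemma \ref{lem:bound}. The only thing to verify carefully is that the $\bm{\lambda}_e$ used on both sides is literally the same indicator assignment, which is immediate since the SPN and the BN share the same variable set $\bm{V}$. Correctness and termination of the algorithm itself follow from Proposition \ref{prop:maxspn}, which assures us that the CSPN maximization on line 4 can indeed be computed.
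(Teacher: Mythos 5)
Your proposal is correct and follows exactly the paper's route: the paper states that Theorem \ref{thm:alg} ``follows directly from Lemma \ref{lem:bound}'', and your unfolding of both sides via the network/SPN polynomials at the indicator setting $\bm{\lambda}_e$ is precisely that direct instantiation. Nothing is missing; the extra check that the same indicator assignment is used on both sides is a reasonable (if routine) point of care.
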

Thus, we find that the probability of not assigning treatment to a patient with the severe strain in the treatment example can be no greater than $\texttt{MAR}_{\max}(\mathcal{S}, \bm{\mathcal{C}},\bar{T}\land s_3)=0.07$.

\subsection{Tightness of Upper Bound} \label{sec:tight}

Though our algorithm provides an upper bound on $\marmax$ for the Bayesian network, it will not typically be tight. This is illustrated in Figure \ref{fig:spnmaximization}, where the two different sum nodes representing $\bm{\theta}_{V|s_3}$ are assigned different weights by the maximization in the CSPN, while this is not possible for the original CrBN. We will now provide a precise characterization of the looseness of the upper bound, using the concept of structural enrichment to find an enriched CrBN which can be put in a 1-1 correspondence with the CSPN.

\begin{restatable}{definition}{defStruc}
A structural enrichment of a CrBN $\mathbb{C}_{\mathcal{N}}[\bm{\mathcal{C}}]$ is a new CrBN $\mathbb{C}_{\mathcal{N'}}[\bm{\mathcal{C}}']$ with a new underlying graph $(\bm{V},\bm{E}')$ such that $\bm{E}\subseteq \bm{E}'$, and a new credal set given by
$$(\forall \bm{w}_i \in \bm{W}_i) \mathcal{C}_{V_i \vert \bm{u}_i,\bm{w}_i}'=\mathcal{C}_{V_i \vert \bm{u}_i},$$
where $\bm{U}_i$ are the parents of $V_i$ in $\mathcal{N}$, while $\bm{W}_i$ are the newly added parents in $\mathcal{N}'$ which were not parents in $\mathcal{N}$.  
\end{restatable}

To illustrate this, suppose that we had a BN with 3 variables $A, B, C$, where $A$ is the only parent of $C$ and we have the credal set $\theta_{C=0|A=0} \in [0.3, 0.8]$. If we now consider a structurally enriched BN where $A, B$ are both parents of $C$, then we have the same interval $\theta_{C=0|A=0, B=b} \in [0.3, 0.8]$ for $b \in \{0, 1\}$, but, crucially, the parameters for $b = 0$ and $b = 1$ can take different values in this interval.
\begin{definition}
Given a CrBN $\mathbb{C}_{\mathcal{N}}[\bm{\mathcal{C}}]$ and ordering $\ordering$, the \emph{maximal structural enrichment} $\mathbb{C}_{\mathcal{N}^+_{\ordering}}[\bm{\mathcal{C}}_\ordering^+]$ is the (unique) structurally enriched CrBN which has an edge $(V_i,V_j)$ for all $i<_{\ordering}j$.
\end{definition}
The maximal structural enrichment of a CrBN with some ordering simply allows for the choice of parameters (within the credal set) at some variable to depend on all variables earlier in the order. In the case of the treatment example, the ordering $S,R,V,T$ (used for compilation in Figure \ref{fig:acexample}) would give a structurally enriched CrBN where the parameter $\bm{\theta}_{V|s_3}$ is allowed to depend on $R$, as it does in the CSPN (Figure  \ref{fig:spnmaximization}).  
\begin{restatable}{theorem}{thmExact}
\label{thm:strongbound}
The output $\texttt{MAR}_{\max}(\mathcal{S},\bm{\mathcal{C}}, e)$ returned by Algorithm \ref{alg:upper} using ordering $\ordering$ satisfies
$$\texttt{MAR}_{\max}(\mathcal{S}, \bm{\mathcal{C}}, e)=\texttt{MAR}_{\max}(\mathcal{N}^+_{\ordering}, \bm{\mathcal{C}}^+_{\ordering},e).$$

\end{restatable}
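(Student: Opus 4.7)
The plan is to compare $\marmax(\mathcal{S},\bm{\mathcal{C}}_\mathcal{S},e)$ and $\marmax(\mathcal{N}^+_\ordering,\bm{\mathcal{C}}^+_\ordering,e)$ via the correspondence between sum nodes of $\mathcal{S}$ and parameter contexts of $\mathcal{N}^+_\ordering$. For each sum node $t$ of $\mathcal{S}$ splitting on $V_i$, the ordering condition of Definition \ref{def:topord} forces every ancestor split of $t$ to be on some $V_j$ with $j<i$, so the branches chosen along the path to $t$ in any complete subcircuit are determined purely by the $\mathcal{N}^+_\ordering$-context $(\bm{u}_i,\bm{w}_i)$. The sum node $t(\bm{u}_i,\bm{w}_i)$ reached by a given context is therefore well-defined, and multiple contexts may share a sum node, forming an equivalence class $E_t$.

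For the $(\leq)$ direction, any $W\in\bm{\mathcal{C}}_\mathcal{S}$ induces $\Theta^+\in\bm{\mathcal{C}}^+_\ordering$ via $\theta^+_{v_i|\bm{u}_i,\bm{w}_i}:=W_{t(\bm{u}_i,\bm{w}_i)}(v_i)$, which is automatically constant on each $E_t$ and lies in $\mathcal{C}_{V_i|\bm{u}_i}=\mathcal{C}^+_{V_i|\bm{u}_i,\bm{w}_i}$. Expanding $l_\mathcal{S}[\bm{\lambda}_e,W]$ as a sum over complete subcircuits and identifying each subcircuit with a $\mathcal{N}^+_\ordering$-assignment (using the polynomial equivalence of Proposition \ref{prop:SPNeq} term-by-term) gives $l_\mathcal{S}[\bm{\lambda}_e,W]=l_{\mathcal{N}^+_\ordering}[\bm{\lambda}_e,\Theta^+]$, so $\marmax(\mathcal{S},\bm{\mathcal{C}}_\mathcal{S},e)\leq\marmax(\mathcal{N}^+_\ordering,\bm{\mathcal{C}}^+_\ordering,e)$.

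For the $(\geq)$ direction I would show by bottom-up induction on the variable index that the $\mathcal{N}^+_\ordering$-maximum is attained at a $\Theta^+$ constant on every $E_t$, hence lying in the image of the map above. The key observation is that at a sum node $t$ splitting on $V_i$, the inner optimization over $\theta^+_{V_i|c}$ factors as $\max_{\bm{\theta}\in\mathcal{C}_{V_i|\bm{u}_i}}\sum_{v_i}\theta(v_i)\,\beta_c\gamma_t(v_i)$, where $\beta_c$ collects polynomial contributions strictly upstream of $t$ and $\gamma_t(v_i)$ is the credal-maximized value of the subtree rooted at the $v_i$-th child of $t$. By Lemma \ref{lem:parents} applied iteratively to every later sum node inside $t$'s subtree, every $\mathcal{N}$-parent of every variable appearing in that subtree must already be determined at $t$ (since descendants of $t$ cannot split on earlier variables), so $\gamma_t(v_i)$ depends only on $t$, $v_i$, and the values determined at $t$, all of which are shared across $E_t$. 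Hence the argmax is context-independent, and aggregating across all sum nodes gives $\marmax(\mathcal{N}^+_\ordering,\bm{\mathcal{C}}^+_\ordering,e)\leq\marmax(\mathcal{S},\bm{\mathcal{C}}_\mathcal{S},e)$. The main obstacle will be rigorously establishing this factorization and showing that $\gamma_t(v_i)$ is indeed independent of those $\bm{w}_i$-components of $c\in E_t$ that are not already pinned down at $t$, which is where the interplay between the ordering condition and smoothness/determinism does the real work.
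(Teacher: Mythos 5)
Your $(\leq)$ direction is essentially sound: the map from CSPN weights to enriched parameters via the (well-defined, by determinism and the ordering condition) assignment of contexts to sum nodes, together with the term-by-term identification of complete subcircuits with full instantiations, does give $\marmax(\SPN,\CREDAL_{\SPN},e)\leq\marmax(\mathcal{N}^+_{\ordering},\CREDAL^+_{\ordering},e)$. The gap is in the $(\geq)$ direction, and it is exactly at the step you flag as the "main obstacle". First, the supporting claim that "every $\mathcal{N}$-parent of every variable appearing in $t$'s subtree must already be determined at $t$" is false: a variable $V_j$ split inside $\AC_t$ may have a parent $V_k$ with $i<k<j$ that is itself split on below $t$, so it is not determined at $t$; Lemma \ref{lem:parents} only fixes the parents of the splitting variable of each individual sum node across subcircuits through that node. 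Second, and more seriously, the factorization $\theta(v_i)\,\beta_c\,\gamma_t(v_i)$ does not hold as stated in a DAG-shaped SPN: product nodes above $t$ have sibling branches containing sum nodes that split on variables later than $V_i$, and in $\mathcal{N}^+_{\ordering}$ the parameters of those variables are indexed by contexts that include $v_i$ and the values of variables split inside $\AC_t$. Hence the "upstream" factor $\beta_c$ is not independent of $v_i$ or of the choices made in $t$'s subtree, and $\gamma_t(v_i)$ cannot be defined as a context-free credal maximum of the subtree until you have already shown that the optimal enriched parameters of those sibling variables can be chosen constant across the relevant context components---which is the very statement you are trying to prove. A bottom-up induction on the variable index does not obviously break this circularity, because the entanglement is horizontal (across product-node siblings), not only vertical (along the subtree).

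For comparison, the paper's proof is built precisely to make this step rigorous: it first applies an expansion operator $\EX$ that distributes products over sums, yielding a tree-shaped SPN whose sum nodes (tracked by labels) correspond exactly to enriched contexts, so that $\EX(\SPN)$ with independent credal sets is literally the maximally enriched CrBN; it then compares the relaxed credal set with the label-tied one by showing, via multilinearity (each parameter appears in only one child polynomial of a product node, so the factors can be maximized separately), that the maximized children of two same-labeled sum nodes are proportional, whence a single weight choice maximizes both. That proportionality argument is the rigorous counterpart of your "the argmax is context-independent" claim. If you want to keep a direct, expansion-free proof, you need to prove an analogue of that proportionality statement for the DAG circuit (or otherwise justify the factorization of the enriched-network maximization along the circuit structure); as written, the $(\geq)$ half asserts the conclusion rather than establishing it.
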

An implication of this result (see Appendix for the proof) is that the ordering $\sigma$ used when compiling the SPN can affect the tightness of the bound. Consequently, it is possible to search over topological orderings to obtain a better bound, at the cost of additional computation; we exploit this in our experiments as the method \textbf{$\text{CUB}_{max}$}. It also demonstrates that if we do, in fact, want to bound the probability of an event in a maximally ordered enrichment, then Algorithm \ref{alg:upper} will give an exact result.

We can also make use of this result to lower bound $\marmax$. We can \textit{project} the optimal parameters $\bm{\theta}^+_{V_i\vert \bm{u}_i, \bm{w}_i}$ found for $\mathbb{C}_{\mathcal{N}^+_{\ordering}}[\bm{\mathcal{C}}_\ordering^+]$ to obtain parameters $\bm{\theta}_{V_i\vert u_i}=\bm{\theta}^+_{V_i\vert \bm{u}_i, \bm{w}_i^*}$ valid for $\mathbb{C}_{\mathcal{N}}[\bm{\mathcal{C}}]$, by fixing some $w_i^*$ for each credal set. It is not guaranteed that the exact solution for $\mathbb{C}_{\mathcal{N}}[\bm{\mathcal{C}}]$ will be such a projection, but it is much easier to search over projections than parameter values and this can provide a strong lower bound in many cases, or serve as a way to initialize a more thorough search algorithm. In our experiments we will evaluate a local greedy search algorithm \textbf{CLB}, which is initialized to an arbitrary projection given by some $\bm{w}_i$ for each credal set $\bm{\mathcal{C}}_i$, and tries a series of local changes $\bm{w}_i \rightarrow \bm{w}_i'$, keeping any that increase the probability. It terminates if it reaches parity with the upper bound or no local improvement can be found. Note that there is no guarantee of convergence to the upper bound -- by Theorem \ref{thm:strongbound} it is only possible when $\texttt{MAR}_{\max}(\mathcal{N}^+_{\ordering}, \bm{\mathcal{C}}^+_{\ordering},e)=\texttt{MAR}_{\max}(\mathcal{N}, \bm{\mathcal{C}},e)$, and even when this holds \textbf{CLB} can get stuck in local optima.

\section{Experimental Results}

We evaluate our method on the CREPO \cite{cabanas2021crepo} credal inference benchmark, which consists of 960 queries over 377 small-to-moderately sized networks, and, to evaluate scalability, hepar2, a 70-node Bayesian network. We include three of our methods\footnote{Code for algorithms and experiments available at https://github.com/HjalmarWijk/credal-bound}: (i) \textbf{CUB}, which computes an upper bound; (ii) \textbf{$\text{CUB}_{max}$}, which searches over ($n= 30$) orderings to obtain a better bound; and (iii) \textbf{CLB}, which computes a lower bound as described in Section \ref{sec:tight} (capped to $n=100$ steps). 

We compare the performance of our methods to exact credal variable elimination \cite{COZMAN2000199} (where feasible) and ApproxLP \cite{antonucci2015approxlp}, an approximate method returning a lower bound which has been shown to be state of the art both in terms of scalability and accuracy of inferences. We do not consider comparison to the IntRob algorithm presented in \cite{wang2021provable} as it cannot address arbitrary credal sets, and Algorithm \ref{alg:upper} is equivalent to theirs in the limited cases where both can be applied (when all credal sets are either singletons or maximal).

We split CREPO into two subsets, CREPO-exact (768 queries), where an exact solution could be computed, and CREPO-hard (192 queries), where it ran out of memory (16GB). Since other methods do not support inferences involving decision functions, we use an augmented BN only for hepar2, where both the exact and ApproxLP methods run out of memory even without a decision function.

In Table \ref{tbl:comparison}, for all benchmarks we report the time taken by each method. For CREPO-exact, we compute the average difference in computed probability to the exact result (positive/negative for upper/lower bounds respectively), while for the other sets we report the average difference to the best upper bound. Remarkably, we see that our upper-bounding and lower-bounding algorithms dominate ApproxLP on CREPO-exact, with better lower bounds being produced in an order of magnitude less time. Given the simplicity of the greedy iteration in \textbf{CLB}, this is primarily explained by the effectiveness of projection from the upper bound as a starting heuristic. On CREPO-hard, our upper bounding is the only method capable of providing guarantees. Meanwhile, our lower bound performs worse on average than ApproxLP, but only by a small amount, while using significantly less time. Finally, we see that our method is the only one to scale to the challenging hepar2 network, completing in a reasonable amount of time even with the significant additional computational expense of incorporating a decision function.

\begin{table}
\resizebox{\columnwidth}{!}{
\begin{tabular}{lllllll}
\textbf{Network}                                                   &        & \textbf{Exact} & \textbf{ApproxLP} & \textbf{CLB} & \textbf{CUB} & \textbf{$\text{CUB}_{max}$} \\ \toprule
\multirow{2}{*}{CREPO-exact}                          & Diff   & 0     &   -0.0523    & -0.0432 & 0.0018 & 0.0015       \\
                                                          & Time(ms)   & 626        & 384     & 46(6) & 2(6) & 209(618)                        \\ \midrule
\multirow{2}{*}{CREPO-hard} & Diff & -       & -0.0529 & -0.0742 & 0.0220 & 0           \\
                                                          & Time(ms)   & -      & 1154           & 65(6) & 2(6) & 231(618)            \\ \midrule
\multirow{2}{*}{Hepar2}        & Diff & -          & -   & -0.0917  & 0 & -              \\ 
                                                          & Time (s)   & -          & - & 429(287)    & 4(287) & -    
                                                          
\end{tabular}
}
\caption{Average computation time (compilation time in parenthesis) and difference in computed probability to exact/upper bound. $-$ indicates the method ran out of memory (16GB).}
\label{tbl:comparison}
\end{table}

\section{Conclusions}
We have demonstrated how to construct
an SPN whose parameters (sum node weights) can be semantically interpreted
as representing specific conditional probability distributions
in a CrBN. The result relies on a novel SPN compilation technique, which ensures that (i) all sum nodes correspond to some variable $V$ and (ii) that the values of all parents of $V$ can be uniquely determined. This is significant as (after applying
constraint relaxation) it enables a direct mapping of the credal
sets of the CrBN to a CSPN, which, unlike the CrBN, can be
tractably maximized. This gives an efficient method to analyse robustness of decision functions learnt from data in the presence of imprecise knowledge, distributional shifts and exogenous variables.  
Our method provides formally guaranteed upper and lower bounds on the probability of an event of interest, and the experimental evaluation has additionally demonstrated that it compares favourably in accuracy to state-of-the-art approximate methods while being orders of magnitude faster.

In future work the upper bound could 
be improved through reintroducing some of the dropped equality constraints between weights of sum nodes in the CSPN, though this will involve trade-offs between computational challenge and accuracy. 
The methodology could 
also be extended to handle more challenging queries such as maximum expectations, by imposing additional structure on the compiled circuit.

\section*{Acknowledgements}
This project was funded by the ERC
under the European Union’s Horizon 2020 research and innovation programme (FUN2MODEL, grant agreement No.
834115), and by the Future of Humanity Institute, Oxford University.

%% The file named.bst is a bibliography style file for BibTeX 0.99c
\bibliographystyle{named}
\bibliography{references.bib}

\newpage
\appendix

\section{Proofs}

\subsection{Proof of Lemma \ref{lem:parents}}

\lemParents*
\begin{proof}
First, we show that the part of the subcircuit "external to" the sub-AC $\AC_{\acnode}$ is sufficient to determine the value of the parents of $V$. 

Suppose that $\subcirc, \subcirc'$ are two complete subcircuits which both include $\acnode$ and differ only in the chosen $+$-node children in the sub-AC from $t$. Since $\AC$ respects a topological ordering, by Defn. \ref{def:topord} no nodes in $\AC_{\acnode}$ split on any variable in $\text{pa}_{\mathcal{N}}(V)$. Then, these subcircuits must assign the same values to $\text{pa}_{\mathcal{N}}(V)$.

Now for the main result, suppose for contradiction there exist two complete subcircuits $\subcirc_1, \subcirc_2$ which both include $\acnode$ and specify different values $\parvals_1, \parvals_2$ for $\text{pa}_{\mathcal{N}}(V)$ through indicators. We will write $\subcirc_1 = (\subcirc_{1_P}, \subcirc_{1_S})$, where $\subcirc_{1_P}$ (prefix) is the part of the subcircuit external to $\AC_{\acnode}$, and $\subcirc_{1_S}$ (suffix) is the part internal to $\AC_{\acnode}$ (similar for $\subcirc_2$). Now we consider constructing a subcircuit $\subcirc_2' = (\subcirc_{2_P}, \subcirc_{1_S})$ which has the prefix of $\subcirc_2$, but suffix of $\subcirc_1$. 

Comparing $\subcirc_1, \subcirc_2'$, we see that they are identical in $\AC_{\acnode}$, meaning that they specify the same value of $V$ (say, $\val$), but they specify different values $\parvals_1, \parvals_2$ of $\text{pa}_{\mathcal{N}}(V)$. Since each subcircuit corresponds to a term of the network/AC polynomial, the subcircuits must contain parameters $\param_{\val|\parvals_1}, \param_{\val|\parvals_2}$ respectively. Since these parameters depend on the value of $V$, they must appear in $\AC_{\acnode}$. But this is a contradiction as both subcircuits are identical in that sub-AC.
\end{proof}

\subsection{Proof of Theorem \ref{thm:strongbound}}
\thmExact*
To prove Theorem \ref{thm:strongbound} we will introduce an expansion operation $\texttt{EX}$ on SPNs, which intuitively `distributes' all products over the sum nodes, so that product nodes only have leaf node children. As we perform the expansion we will use labels on the sum nodes to track their origin in the original SPN.
\begin{definition}
\label{def:ex}
Let $\SPN$ be a SPN respecting the ordering $\sigma$, where each sum node is given a unique label. We define the \emph{expanded SPN} $\EX(\SPN)$ to be an SPN constructed as follows:

\begin{itemize}
    \item If the root $t$ is a single leaf node, then $\EX(\SPN)=\SPN$;
    \item If the root $t$ is a sum node labelled $l$ with children $t_1,...,t_n$, then $\texttt{EX}(\mathcal{S})$ is a sum node also labelled $l$ with children $\texttt{EX}(\mathcal{S}_{t_1}),...,\texttt{EX}(\mathcal{S}_{t_n})$ and the same weights;
    \item If the root is a product node which has only leaf nodes as children, then $\texttt{EX}(\mathcal{S})=\mathcal{S}$;
    \item If the root is a product node $t$ with at least one product node child $p$, then $\texttt{EX}(\mathcal{S})$ is the result of applying $\texttt{EX}$ to a single product node with all the children of $t$ and $p$ together;
    \item If the root is a product node $t$ with at least one sum node child and no product node children, then let $s$ (labelled $l$) be the first sum node child in $\sigma$, let $s_1,...,s_n$ be the children of $s$ and let $t_1,...,t_m$ be the other children of $t$. Then $\texttt{EX}(\mathcal{S})$ is a sum node (labelled $l$) with $n$ children $\texttt{EX}(p_1),...,\texttt{EX}(p_n)$, and weights identical to $s$. Each SPN $p_i$ has a product node at the root and children $t_1,...,t_m,s_i$, labelled with their original labels.
\end{itemize}
\end{definition} 

It should be noted that each of these operations performs recursive calls on SPNs with fewer nodes than the original SPN. Thus the recursion will always finish.
\begin{lemma}
For any SPN $\mathcal{S}$ respecting ordering $\sigma$, the expanded SPN $\EX(\SPN)$ has the same SPN polynomial $l_{\mathcal{S}}[\bm{\lambda},\bm{\Theta}]=l_{\texttt{EX}(\mathcal{S})}[\bm{\lambda},\bm{\Theta}]$. Further, $\EX(\SPN)$ also respects $\sigma$, and all its product nodes have only leaf nodes as children.

\end{lemma}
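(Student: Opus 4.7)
The plan is to prove all three conclusions (polynomial equivalence, that $\EX(\SPN)$ respects $\sigma$, and that every product node in $\EX(\SPN)$ has only leaf children) simultaneously by strong induction on the total number of nodes in $\SPN$, with the five clauses of Definition \ref{def:ex} providing the cases. The first and third clauses (single leaf root, and product root whose children are all leaves) are immediate because $\EX(\SPN)=\SPN$ and each condition already holds for $\SPN$ by assumption.

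For the sum-root case (clause 2), the polynomial decomposes as $l_{\SPN}[\bm{\lambda},\bm{\Theta}] = \sum_i w_i\, l_{\SPN_{t_i}}[\bm{\lambda},\bm{\Theta}]$ and identically for $\EX(\SPN)$, so polynomial equality reduces to the IH applied to each child; the ordering is preserved because we reuse the same root sum node (and hence split label) above expansions that already respect $\sigma$, and the product-child condition is inherited. For the nested-product case (clause 4), fusing $t$ with its product child $p$ leaves the polynomial invariant by associativity of multiplication, and the recursion is on an SPN with strictly fewer product nodes, so the IH applies directly.

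The heart of the proof is the final clause, where a sum node is lifted above a product. Writing $s$ (with children $s_1,\ldots,s_n$ and weights $w_1,\ldots,w_n$) for the selected sum child of $t$ and $t_1,\ldots,t_m$ for the remaining children of $t$, the distributive law gives
\[
\Bigl(\prod_{j=1}^m l_{t_j}\Bigr)\Bigl(\sum_{i=1}^n w_i\, l_{s_i}\Bigr) \;=\; \sum_{i=1}^n w_i \Bigl(\prod_{j=1}^m l_{t_j}\Bigr) l_{s_i},
\]
whose right-hand side is the polynomial of a root sum node with children $p_i$, each $p_i$ a product of $t_1,\ldots,t_m,s_i$. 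Each $p_i$ is strictly smaller than $\SPN$ (it drops the node $s$ along with all siblings $s_{k\ne i}$), so the IH delivers $l_{\EX(p_i)} = l_{p_i}$ together with ordering-respect and the product-child condition for each $\EX(p_i)$; combining the display with the IH yields $l_{\EX(\SPN)} = l_{\SPN}$, and the product-child condition for $\EX(\SPN)$ holds because its root is a sum node and all descendants inherit the property.

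The main obstacle is verifying the ordering condition for the newly lifted sum node. The key observation, and what makes the choice of ``first sum node child in $\sigma$'' crucial, is that $\text{split}(s)$ is $\sigma$-earliest among \emph{all} sum nodes in $\SPN_t$. Indeed, by the case hypothesis $t$ has no product-node children, so every other child $t_j$ is either a leaf (containing no sum nodes) or itself a sum node with $\text{split}(t_j) >_\sigma \text{split}(s)$ by our choice of $s$; any sum-node descendant of $t_j$ then splits on a still later variable by Definition \ref{def:topord}. Sum nodes inside some $\SPN_{s_i}$ are descendants of $s$, hence also later by the same Definition. Thus placing $s$ at the root of $\EX(\SPN)$ is consistent with $\sigma$, and the remaining ordering constraints within each $\EX(p_i)$ come from the IH, closing the induction.
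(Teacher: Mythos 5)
Your proof is correct and follows essentially the same route as the paper's: a structural induction over the five clauses of Definition \ref{def:ex}, using associativity/distributivity for polynomial equality and the observation that the chosen sum child $s$ splits on the $\sigma$-earliest variable among all sum nodes below $t$ to preserve the ordering. The only (cosmetic) difference is your induction measure (total node count rather than the paper's reverse topological order), which if anything justifies the recursive calls in the fusion and distribution cases a bit more cleanly.
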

\begin{proof}
We will prove this by induction on the SPNs $\EX(\SPN_t)$, where we proceed in reverse topological order of the nodes $t$ in the SPN (i.e. children before parents).

First, if the root is a leaf node or a product node with only leaf node children then all the statements are trivially true.

If the root is a sum node $t$ labelled $l$ with children $t_1,...,t_n$, then the SPN polynomial is the weighted sum of the polynomials of the expanded children which are unchanged (by the induction hypothesis), where the weights (parameters) are the same as in the original SPN. Further, the expanded children split on the same variables as before, respecting ordering, so the sum node as a whole does so as well.

If the root is a product node with another product node as a child, then it is enough to observe that (by the associativity of products) moving all children to one product node has no effect on the polynomial or ordering - the remaining result follows from the inductive hypothesis on the combined product node.

It remains to show the result for product nodes with at least one sum node child (and no product node children). Adopting the same notation used in the definition, we will show that the sum node $s$ must split on a variable $V_i$ such that $V_i < V_j$ for all other variables $V_j$ split on somewhere in $\mathcal{S}$. For $V_k$ split on at one of the other sum node children $t_1,...,t_m$ we have $V_i<V_k$ by construction. For some $V_j$ split on elsewhere any node splitting on it must be a descendant of either $s$ or some $t_k$, and since $\mathcal{S}$ obeys the ordering condition $V_k < V_j$ which shows the claim by transitivity. This (along with the inductive hypothesis) shows that ordering is respected. The polynomial being the same follows immediately from the distributive law.

The overall procedure only produces sum nodes, leaf nodes, and product nodes with only leaf nodes as children.
\end{proof}
Note that requiring all product nodes to have only leaf node children is a very restrictive condition. Any complete subcircuit of such an SPN must be a single path of sum nodes $t_1,...,t_n$ followed by a single product node containing all its leaf nodes at the end. We can now prove Theorem \ref{thm:strongbound}.
\begin{proof}
Given the CrBN $\mathbb{C}_{\mathcal{N}}[\bm{\mathcal{C}}_{\mathcal{N}}]$ (with maximal ordered enrichment $\mathbb{C}_{\mathcal{N}^+_{\ordering}}[\bm{\mathcal{C}}^+_{\mathcal{N}}]$), let $\mathbb{C}_{\mathcal{S}}[\bm{\mathcal{C}}_{\mathcal{S}}]$ be the CSPN constructed by Algorithm \ref{alg:upper} respecting ordering $\ordering$.

We now define credal sets $\CREDAL^-_{\mathcal{S}}$ and $\CREDAL^+_{\mathcal{S}}$ over the weights/parameters of the \textit{expanded} SPN $\EX (\SPN)$. In both, for a sum node $s$ in $\EX(\SPN)$, we assign individual credal sets $\mathcal{C}_i$ to the weights of $s$ from the node with the same label in the original SPN $\SPN$. However, for $\CREDAL^-_{\mathcal{S}}$, we additionally impose the constraint that sum nodes with the same label in $\EX(\SPN)$ must also have the same weights; $\mathbb{C}_{\EX (\mathcal{S})}[\bm{\mathcal{C}}^-_{\mathcal{S}}]$ is thus not a CSPN (while $\mathbb{C}_{\EX (\mathcal{S})}[\bm{\mathcal{C}}^+_{\mathcal{S}}]$ is).

Firstly, we show that $\marmax(\SPN, \CREDAL_{\SPN} \evidence) = \marmax(\EX (\SPN), \CREDAL^-_{\SPN}, \evidence)$. Since $\SPN$ and $\EX(\SPN)$ have the same polynomial, the distributions are equivalent (for the same parameters). Further, since the credal sets $\CREDAL_{\SPN}, \CREDAL_{\SPN}^-$ are the same by construction, the maximal marginal probability (and parameter assignment) is the same for both.

Second, we show that $\marmax(\EX (\SPN), \CREDAL^+_{\SPN}, \evidence) = \marmax(\EX (\SPN), \CREDAL^-_{\SPN}, \evidence)$, that is, that the same value can be obtained despite the additional constraints on $\CREDAL^-_{\SPN}$. Recall that, since $\mathbb{C}_{\EX (\mathcal{S})}[\bm{\mathcal{C}}^+_{\mathcal{S}}]$ is a CSPN, we can find $\marmax(\EX (\SPN), \CREDAL^+_{\SPN}, \evidence)$ by maximizing the weights at each sum node locally based on the maximized value of its children (\ref{prop:maxspn}).  Looking at the definition of the expansion process (Defn. \ref{def:ex}), we see that the only way in which two sum nodes $r, r'$ in $\EX(\SPN)$ can have the same label is if two product nodes in the original SPN have a common sum node child (with that label). Then, using similar notation as in the Definition, $r$ will have children $\EX(p_1), ..., \EX(p_n)$, where $p_i$ is an SPN with a product node root and children $t_1, ..., t_m, s_i$, while $r'$ will have children $\EX(p_1'), ..., \EX(p_n')$ where $p_i'$ is an SPN with product node root and children $t_1', ..., t_{m'}', s_i$. Since applying $\EX$ does not change the SPN polynomial, the $i^{\text{th}}$ child of $r$ has polynomial $l_{\SPN_{p_i}}[\indicators, \parameters] = l_{\SPN_{s_i}}[\indicators, \parameters] \prod_{j=1}^{m} l_{\SPN_{t_i}}[\indicators, \parameters]$. Since SPN polynomials are multilinear functions of their parameters, each parameter can appear in only one of the RHS polynomials. Thus we can maximize each separately, i.e. $\max_{\parameters \in \CREDAL^+_{\SPN}} l_{\SPN_{p_i}}[\indicators, \parameters] = \max_{\parameters \in \CREDAL^+_{\SPN}} l_{\SPN_{s_i}}[\indicators, \parameters] \prod_{j=1}^{m} \max_{\parameters \in \CREDAL^+_{\SPN}} l_{\SPN_{t_j}}[\indicators, \parameters] = \max_{\parameters \in \CREDAL^+_{\SPN}} l_{\SPN_{s_i}}[\indicators, \parameters] \prod_{j=1}^{m} c$ where $c := \prod_{j=1}^{m} \max_{\parameters \in \CREDAL^+_{\SPN}} l_{\SPN_{t_j}}[\indicators, \parameters]$ is independent of $i$. Applying a similar argument to the $i^{\text{th}}$ child of $r'$, we get $\max_{\parameters \in \CREDAL^+_{\SPN}} l_{\SPN_{p_i'}}[\indicators, \parameters] = \max_{\parameters \in \CREDAL^+_{\SPN}} l_{\SPN_{s_i}}[\indicators, \parameters] \prod_{j=1}^{m} c'$ where $c' := \prod_{j=1}^{m'} \max_{\parameters \in \CREDAL^+_{\SPN}} l_{\SPN_{t_j'}}[\indicators, \parameters]$ is again independent of $i$. Thus we see that the value of the $i^{\text{th}}$ child of $r$ and $r'$ are proportional, and hence the same choice of parameters/weights for $r, r'$ maximize both. Thus we have that the maximizing weight in $\CREDAL^+_{\SPN}$ is also in $\CREDAL^-_{\SPN}$, and $\marmax(\EX (\SPN), \CREDAL^+_{\SPN}, \evidence) = \marmax(\EX (\SPN), \CREDAL^-_{\SPN}, \evidence)$.

Finally, we want to show that $\marmax(\EX (\SPN), \CREDAL^+_{\SPN}, \evidence) = \texttt{MAR}_{\max}(\mathcal{N}_{\ordering}^+,\bm{\mathcal{C}}_{\mathcal{N}}^+,e)$. We have already established that the product nodes in $\EX(\SPN)$ have only leaf node children. Thus, each product node must correspond to a particular instantiation of the variables, and the sum nodes from the root must branch out to cover all of these instantiations. Since $\EX_{\SPN}$ respects the ordering $\ordering$, this means that $\EX(\SPN)$ takes the form of a rooted tree which splits on each variable in succession in the order $\ordering$. This also provides a very clear semantics for the weight on each edge of a sum node $t$: it is simply $\theta_{v_i|u_i}$, where $v_i$ is the value corresponding to that edge of the variable $V_i$ which $t$ is splitting on, and $u_i$ records the values of all variables before $V_i$ in the ordering $\ordering$ (which is unique as the SPN is a tree). It is then apparent that $\EX(\SPN)$ precisely describes the polynomial of $\mathbb{C}_{\mathcal{N}^+_{\ordering}}[\bm{\mathcal{C}}^+_{\mathcal{N}}]$. Further, the credal sets $\CREDAL^+_{\mathcal{N}}$ and $\CREDAL^+_{\SPN}$ are the same by construction, so we are done.

All together, we have that $\marmax(\SPN, \CREDAL_{\SPN} \evidence) = \marmax(\EX (\SPN), \CREDAL^-_{\SPN}, \evidence) = \marmax(\EX (\SPN), \CREDAL^+_{\SPN}, \evidence) = \texttt{MAR}_{\max}(\mathcal{N}_{\ordering}^+,\bm{\mathcal{C}}_{\mathcal{N}}^+,e)$, as required.
\end{proof}

\section{Comparison to Intervention Sets}

%We wish to study settings where there is some uncertainty in the data-generating process, forcing us to consider performance across a whole family of possible distributions. A salient set of distribution families \cite{wang2021provable} are those which arise from allowing certain interventions on the Bayesian network.
Bayesian networks are particularly convenient for studying data-generating processes which include some adversary or unknown process that could seemingly arbitrarily \emph{intervene} on the behaviour of certain variables in our system, thus changing the distribution. \cite{wang2021provable} formalize this by defining intervention sets where some subset of variables $\bm{W} \subseteq V$. In this section, we show how our formulation of credal sets over augmented BNs generalizes their definitions.

\cite{wang2021provable} consider \emph{parametric interventions} on variables $\bm{W}$ in a BN $\mathcal{N}$ which assign new values to all the parameters associated with variables in $\bm{W}$. Each such intervention leads to a new BN $\mathcal{N}'$ with a new joint distribution $p_{\mathcal{N}'}$. We define $\mathcal{I}_N[\bm{W}]$ to be the family of distributions arising from some parametric intervention on $\bm{W}$ in $\mathcal{N}$. A \emph{structural intervention} on $\bm{W}$ in $\mathcal{N}$ can further introduce new edges to the graph through a context function $C_{\bm{W}}:V\to\mathcal{P}(V)$, allowing the variables in $\bm{W}$ to depend on variables which previously they could not. We define $\mathcal{I}_N[\bm{W},C_{\bm{W}}]$ to be the family of distributions arising from some structural intervention with context function $C_{\bm{W}}$ on $\bm{W}$ in $\mathcal{N}$.

\subsection{Credal Sets}
The interventional families of distributions assume that for each variable we either have complete certainty about its behaviour (if it is not an intervenable variable), or its behaviour is completely unknown (if it is an intervenable variable). This does not allow us to represent other forms of uncertainty, such as that which might arise from learning a model from data. However, these can be represented by credal sets.

\begin{proposition}
\label{prop:param}
For any BN $\mathcal{N}$ and subset of the variables $\bm{W}$, there exists a CrBN $\mathbb{C}_{\BN}[\CREDAL]$ such that
$$\mathcal{I}_{\mathcal{N}}[\bm{W}]=\mathbb{C}_{\BN}[\CREDAL].$$
\end{proposition}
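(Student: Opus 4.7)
The plan is to construct a CrBN explicitly whose credal family matches $\mathcal{I}_{\mathcal{N}}[\bm{W}]$, by mirroring the intervention semantics in the local credal sets. Keep the same graph $\mathcal{G}$ of $\mathcal{N}$. For each variable $V_i \notin \bm{W}$ and each parent instantiation $\bm{u}_i$, define $\mathcal{C}_{V_i \vert \bm{u}_i}$ to be the singleton containing only the original parameter vector $\bm{\theta}_{V_i \vert \bm{u}_i}$ of $\mathcal{N}$. For each variable $V_i \in \bm{W}$ and each parent instantiation $\bm{u}_i$, let $\mathcal{C}_{V_i \vert \bm{u}_i}$ be the full probability simplex on the domain of $V_i$. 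Setting $\bm{\mathcal{C}} = \prod_{V_i, \bm{u}_i} \mathcal{C}_{V_i \vert \bm{u}_i}$ yields a credal set of the strong extension form required by the CrBN definition.

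Next, I would verify the set equality by double inclusion. For $\mathcal{I}_{\mathcal{N}}[\bm{W}] \subseteq \mathbb{C}_{\mathcal{N}}[\bm{\mathcal{C}}]$: by definition, a parametric intervention on $\bm{W}$ produces a BN $\mathcal{N}'$ sharing the graph $\mathcal{G}$ with $\mathcal{N}$, whose parameters at variables in $\bm{W}$ may take arbitrary values in their respective simplices and whose parameters at variables outside $\bm{W}$ coincide with those of $\mathcal{N}$. Both conditions place these parameters in $\bm{\mathcal{C}}$, hence $p_{\mathcal{N}'} \in \mathbb{C}_{\mathcal{N}}[\bm{\mathcal{C}}]$. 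For the reverse inclusion, any $\Theta' \in \bm{\mathcal{C}}$ agrees with $\Theta$ on all parameters for $V_i \notin \bm{W}$ (by the singleton constraint) and is otherwise unconstrained on $\bm{W}$; hence $\mathcal{N}_{\Theta'}$ is obtained from $\mathcal{N}$ by a parametric intervention on $\bm{W}$ that assigns the values $\Theta'\big|_{\bm{W}}$, so $p_{\mathcal{N}_{\Theta'}} \in \mathcal{I}_{\mathcal{N}}[\bm{W}]$.

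There is essentially no hard step here; the argument is a direct unpacking of the two definitions, and the only thing to be careful about is to make the decomposition $\bm{\mathcal{C}} = \prod_{V_i, \bm{u}_i} \mathcal{C}_{V_i \vert \bm{u}_i}$ explicit so that the strong-extension property of Definition 2 is manifest. One minor point worth spelling out is that the ``intervention'' view and the ``credal parameter'' view both act conditional-distribution-wise (i.e.\ separately for each $\bm{u}_i$), so the cartesian-product credal set exactly captures the freedom allowed by arbitrary parametric interventions, with no spurious correlations between CPT rows.
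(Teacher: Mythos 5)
Your construction (singleton credal sets for $V_i \notin \bm{W}$, the full simplex for $V_i \in \bm{W}$, combined via the strong-extension cartesian product) and the double-inclusion argument are exactly the paper's own proof, just with the product decomposition spelled out slightly more explicitly. The proposal is correct and matches the paper's approach.
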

\begin{proof}
Given $\bm{W}$, construct a CrBN $\mathbb{C}_{\BN}[\CREDAL]$ with a credal set such that $\mathcal{C}_{V_i\vert u_i}$ is maximal (all probability distributions allowed) if $V_i\in \bm{W}$, and a singleton containing only the parameter value in $\mathcal{N}$ otherwise. For any BN $\mathcal{N}'\in \mathcal{I}_\mathcal{N}[\bm{W}]$ it will differ from $\mathcal{N}$ only for parameters related to variables in $\bm{W}$. Since these are allowed to take any value by the credal set  $\CREDAL$, we must have $\mathcal{N}'\in \mathbb{C}_{\BN}[\CREDAL]$. Likewise, for any $\mathcal{N}'\in \mathbb{C}_{\BN}[\CREDAL]$ it can differ from $\mathcal{N}$ only for parameters of variables in $\bm{W}$, so there will be some intervention generating $\mathcal{N}'$. 
\end{proof}
This demonstrates that the notion of CrBNs generalizes the notion of parametric interventions. It can also generalize structural interventions, though we need a CrBN on a structurally enriched graph. Here we reproduce our definition of structural enrichments for CrBNs from the main paper:
\defStruc*

\begin{proposition}
For any BN $\mathcal{N}$, subset of the variables $\bm{W}$, ordering $\sigma$, and context function \cite{wang2021provable} $C_{\bm{W}}$ whose edges respect $\sigma$, there exists a structurally enriched CrBN $\mathbb{C}_{\BN}[\CREDAL]$ such that
$$\mathcal{I}_{\mathcal{N}}'[\bm{W},C_{\bm{W}}]=\mathbb{C}_{\BN}[\CREDAL].$$
\end{proposition}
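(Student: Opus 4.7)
The plan is to generalize the proof of Proposition \ref{prop:param} by first performing a structural enrichment of the underlying graph to match the extra edges introduced by the context function, and then reusing the same parametric argument on the enriched graph. Concretely, I would construct an enriched DAG $\mathcal{G}'=(\bm{V},\bm{E}')$ by adding to $\bm{E}$ every edge $(V_j,V_i)$ such that $V_j \in C_{\bm{W}}(V_i) \setminus \text{pa}_{\mathcal{N}}(V_i)$ for some $V_i \in \bm{W}$. The assumption that $C_{\bm{W}}$ respects $\ordering$ guarantees that $\mathcal{G}'$ remains acyclic, so this yields a valid BN structure $\mathcal{N}'$, which is a structural enrichment of $\mathcal{N}$ in the sense of Definition \ref{def:struc}.

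Next, I would equip $\mathcal{N}'$ with credal sets analogous to the ones used in Proposition \ref{prop:param}: for each $V_i \in \bm{W}$ and each instantiation $(\bm{u}_i,\bm{w}_i)$ of its enriched parent set, set $\mathcal{C}'_{V_i\vert \bm{u}_i,\bm{w}_i}$ to be the full probability simplex; for each $V_i \notin \bm{W}$, set $\mathcal{C}'_{V_i\vert \bm{u}_i}$ to be the singleton containing the original parameter value $\bm{\theta}_{V_i\vert \bm{u}_i}$ from $\mathcal{N}$ (noting that in this case the enrichment adds no new parents to $V_i$, so the credal set trivially satisfies the compatibility condition in Definition \ref{def:struc}). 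For the intervened variables, the maximality of the credal set also makes the required equality $\mathcal{C}'_{V_i\vert \bm{u}_i,\bm{w}_i}=\mathcal{C}_{V_i\vert \bm{u}_i}$ trivial if we take the original $\mathcal{C}_{V_i\vert \bm{u}_i}$ to be maximal as well, so $\mathbb{C}_{\mathcal{N}'}[\CREDAL]$ is a legitimate structural enrichment of some CrBN over $\mathcal{N}$.

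For the set equality, I would argue by double inclusion. If $\mathcal{N}^* \in \mathcal{I}_{\mathcal{N}}[\bm{W},C_{\bm{W}}]$, its joint distribution factors over $\mathcal{G}'$ with the original parameters for $V_i \notin \bm{W}$ and arbitrary parameters for $V_i \in \bm{W}$ given their enriched parent instantiations. Both choices lie in the respective credal sets of $\CREDAL$, so $\mathcal{N}^* \in \mathbb{C}_{\mathcal{N}'}[\CREDAL]$. Conversely, any $\mathcal{N}^* \in \mathbb{C}_{\mathcal{N}'}[\CREDAL]$ must use the unique singleton parameters for non-intervened variables (matching $\mathcal{N}$) and arbitrary parameters for intervened variables conditioned on the enriched parents, which exactly describes a structural intervention with context $C_{\bm{W}}$; hence $\mathcal{N}^* \in \mathcal{I}_{\mathcal{N}}[\bm{W},C_{\bm{W}}]$.

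The main subtlety, and the main obstacle, will be reconciling the definitions: the structural intervention family is defined with respect to the context function $C_{\bm{W}}$ in the style of \cite{wang2021provable}, whereas the structurally enriched CrBN is defined with respect to edge enlargement plus credal sets indexed by enriched parent instantiations. Care will be needed to show that the conditional distributions in a structural intervention truly correspond, parameter-by-parameter, to the parameters of the enriched BN, so that the arbitrary-parameter degree of freedom on each side matches exactly. Once this correspondence is spelled out, both inclusions follow as in Proposition \ref{prop:param}, and the role of the ordering $\ordering$ is confined to guaranteeing acyclicity of the enriched DAG.
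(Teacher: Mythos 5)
Your construction is essentially the same as the paper's: both take the Proposition~\ref{prop:param} credal sets (maximal for intervened variables, singletons otherwise), structurally enrich by adding exactly the context-function edges (with $\sigma$-compatibility ensuring a valid DAG), observe that the enriched credal sets stay maximal/singleton as required by the enrichment definition, and conclude equality of the two families by the same double-inclusion argument as in Proposition~\ref{prop:param}. The correspondence you flag as a remaining subtlety is handled in the paper at the same level of detail, so no genuine gap.
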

\begin{proof}
We construct $\mathbb{C}_{\BN'}[\CREDAL']$ from $\mathcal{N}$ and $\bm{W}$ as in the proof of Proposition \ref{prop:param}. We then construct the structural enrichment $\mathbb{C}_{\BN}[\CREDAL]$ of $\mathbb{C}_{\BN'}[\CREDAL']$, where we add all edges given by the context function $C_{\bm{W}}$ (guaranteed to be possible since the context function is compatible with $\sigma$). For $V_i\in \bm{W}$, after observing any value for the newly added parents, the credal sets for any value of the remaining parents must be maximal (allow any probability distribution) by construction, and so the credal sets must be maximal for all values of all parents. For $V_i \notin \bm{W}$, after observing any value for the newly added parents the value must be a singleton for all values of the old parents by construction, so it must be a singleton for all values of all parents. As in Proposition \ref{prop:param}, we thus have $\mathbb{C}_{\BN}[\CREDAL]=\mathcal{I}_{\mathcal{N}}'[\bm{W},C_{\bm{W}}]$. 
\end{proof}

This demonstrates that CrBNs fully generalize families found by allowing causal interventions of the type in \cite{wang2021provable}. 

\section{Experimental Details}
\subsection{CREPO}
The full list of queries and models included in the benchmark is given at \hyperlink{https://github.com/IDSIA/crepo}{https://github.com/IDSIA/crepo}. The split into exact and hard is post-hoc and per-query, based on which queries ran out of memory (16GB) when running credal variable elimination. 
\subsection{hepar2}
Since there is to our knowledge no existing credal sets over the hepar2 parameters (other than the ones in \cite{wang2021provable}, where our generalized algorithm exactly matches their results, as expected), we create credal sets %ad hoc 
by allowing perturbations of $\epsilon=0.1$ to all parameters. The results are averaged over 100 randomized queries; the full list can be found at \hyperlink{https://github.com/HjalmarWijk/credal-bound}{https://github.com/HjalmarWijk/credal-bound} 
\subsection{Credal Set Representation}
As ApproxLP is based on solving linear programming instances, it requires credal sets to be represented as a set of linear constraints. Meanwhile, credal variable elimination traditionally assumes the credal set is specified by a finite set of vertices. Our algorithm is entirely agnostic regarding the credal set representation, but in order to allow comparisons with both of these approaches we use Polco (through CREMA \cite{huber2020a}) to convert between them.
\
\end{document}